\documentclass{article}

\usepackage[preprint]{neurips_2026}

\usepackage[utf8]{inputenc} 
\usepackage[T1]{fontenc}    
\usepackage{hyperref}       
\usepackage{url}            
\usepackage{booktabs}       
\usepackage{amsfonts}       
\usepackage{nicefrac}       
\usepackage{microtype}      
\usepackage{xcolor}         
\usepackage{amsmath}
\usepackage{graphicx}
\usepackage{subcaption}
\usepackage{natbib}
\usepackage{multirow}
\usepackage{booktabs}
\usepackage{wrapfig}
\usepackage{amssymb}
\usepackage{amsfonts}
\usepackage{float}
\usepackage{makecell}
\usepackage{amsthm}

\newtheorem{theorem}{Theorem}
\newtheorem{lemma}[theorem]{Lemma}

\newcommand{\modelname}{KGPFN}
\title{KGPFN: Unlocking the Potential of Knowledge Graph Foundation Model via In-Context Learning}

\author{
\textbf{Yisen Gao$^{1}$, Jiaxin Bai$^{2}$, Haoyu Huang$^{1}$, Zhongwei Xie$^{1}$, Yufei Li$^{1}$, Hong Ting Tsang$^{1}$,Sirui Han$^{1}$\thanks{Sirui Han is the corresponding author.}} \\ 
\textbf{Yangqiu Song$^{1}$} \\
Department of Computer Science and Engineering, HKUST, Hong Kong, China$^{1}$ \\
Department of Computer Science and Engineering, HKBU, Hong Kong, China$^{2}$\\
\\
{\texttt\small \{ygaodi\}@cse.ust.hk}
}

\begin{document}

\maketitle

\begin{abstract}
Knowledge graph (KG) foundation models aim to generalize across graphs with unseen entities and relations by learning transferable relational structure. However, most existing methods primarily emphasize relation-level universality, while in-context learning, the other pillar of foundation models remains under-explored for KG reasoning. In KGs, context is inherently structured and heterogeneous: effective prediction requires conditioning on the local context around the query entities as well as the global context that summarizes how a relation behaves across many instances.
We propose KGPFN, a KG foundation model using Prior-data Fitted Network that unifies transferable relational regularities with inference-time in-context learning from structured context. KGPFN first learns relation representations via message passing on relation graphs to capture cross-graph relational invariances. For query-specific reasoning, it encodes local neighborhoods using a multi-layer NBFNet as local context. To enable ICL at global scale, it constructs relation-specific global context by retrieving a large set of instances of the query relation together with their local neighborhoods, and aggregates them within a Prior-Data Fitted Network framework that combines feature-level and sample-level attention. Through multi-graph pretraining on diverse KGs, KGPFN learns when to instantiate reusable patterns and when to override them using contextual evidence. Experiments on 57 KG benchmarks demonstrate that KGPFN achieves strong adaptation to previously unseen graphs through in-context learning alone, consistently outperforming competitive fine-tuned KG foundation models. Our code is available at https://github.com/HKUST-KnowComp/KGPFN.
\end{abstract}

\section{Introduction}

Foundation models have emerged as a dominant paradigm across artificial intelligence, achieving strong generalization through large-scale pretraining and inference-time adaptation via context. Large language models~\cite{gpt4,deepseek,llama} demonstrate that a single pretrained model can carry out diverse tasks such as translation, summarization, question answering, information extraction, and code generation, where the task is specified by the inference-time context together with a few in-context~\cite{manyshot,chung2026manyshotcoticlmakingincontext} input-output demonstrations.
Beyond unstructured modalities, foundation models for structured data are increasingly built around the Prior-Data Fitted Network (PFN)~\cite{pfn} in-context learning paradigm, in which a single model acquires general-purpose predictive capabilities by training on large synthetic datasets and subsequently infers the appropriate task and schema from a small context set at inference time. This line of work includes tabular foundation models such as TabPFN~\cite{tabpfnv1,tabpfnv2.5}, TabICL~\cite{tabicl,tabiclv2}, Limix~\cite{limix}, time-series foundation models~\cite{timepfn}, and emerging extensions to relational data~\cite{relationalpfn}.
Two recurring themes cut across these models: \emph{broad transferability} across tasks and domains with minimal or no retraining, and \emph{strong in-context learning}, namely the ability to adapt predictions by conditioning on a small set of labeled examples provided at inference time.

Knowledge graphs (KGs) are widely used to represent relational data in applications such as question answering~\cite{bai2023complex,unifying}, recommendation~\cite{kgforrec}, and drug discovery~\cite{ctrlhgen}, and they also motivate neural graph databases that support graph management and querying with learned representations~\cite{ngdb1,ngdb2,ngdb3,ngdb-zoo,ngdb-towards}. 
Recently, KG foundation models~\cite{ultra,motif,kgicl,trix} have been proposed to overcome key limitations of traditional KG embedding methods~\cite{query2particles,query2box,transe}, which are largely transductive and require retraining from scratch for each new graph. Instead of learning graph-specific entity embeddings, KG foundation models aim to acquire transferable relational knowledge that generalizes across KGs with different entity/relation vocabularies, enabling zero-shot or few-shot adaptation to unseen graphs.

However, existing KG foundation models~\cite{ultra,kgicl,trix,motif} largely emphasize the universality of relational patterns, while paying far less attention to the second pillar of foundation models, namely in-context learning. Bringing in-context learning to knowledge graphs has therefore emerged as a central challenge for KG foundation models:  models tend to over-apply broadly reusable patterns and fail to adapt their predictions to the specific graph instance at hand without in-context learning.
Inspired by recent progress on structured foundation models~\cite{tabpfnv1,tabpfnv2.5,limix}, extending PFN-paradigm in-context learning to KG-structured data is a promising direction. Yet this extension is non-trivial. Unlike text or table data, context in knowledge graphs is inherently more structured and more heterogeneous: it is not a single surrounding window, but a combination of (i) local context: the query-specific neighborhood in which reasoning takes place and (ii) global context: the relation-specific evidence accumulated across many instances. Accordingly, KG context naturally decomposes into two complementary parts.

\textit{Local context is the immediate subgraph around the query entities.} It matters because the same relation can yield different predictions depending on the surrounding neighborhood: a reusable structural cue may be reliable in one local subgraph yet misleading in another. As shown in Fig.~\ref{fig:intro} left, consider the “triangle” pattern $(\texttt{person},\texttt{born\_in},\texttt{city})$ and $(\texttt{city},\texttt{located\_in},\texttt{nation})$, which, for many individuals, is consistent with $(\texttt{person},\texttt{nationality},\texttt{nation})$. For instance, for $(\texttt{Demis Hassabis},\texttt{born\_in},\texttt{London})$ and $(\texttt{London},\texttt{located\_in},\texttt{UK})$, the local structure aligns with $(\texttt{Demis Hassabis},\texttt{nationality},\texttt{UK})$. Yet the same cue can fail in other neighborhoods, such as $(\texttt{Fei\text{-}Fei Li},\texttt{born\_in},\texttt{Beijing})$ and $(\texttt{Beijing},\texttt{located\_in},\texttt{China})$, where nationality is not determined by the birth-country neighborhood alone. Importantly, local neighborhoods may also include additional relations that modulate or override the birth-based cue. For example, employment edges such as $(\texttt{person},\texttt{works\_at},\texttt{Stanford})$ or $(\texttt{person},\texttt{works\_at},\texttt{DeepMind})$ can provide complementary signals about the appropriate prediction. Together, these examples illustrate that models must condition on local neighborhoods to decide when a generic pattern should be instantiated and when it should be overridden.

\textit{Global context captures relation-level regularities across the graph, describing how a relation behaves over many instances rather than within a single neighborhood.} As shown in Fig.~\ref{fig:intro} right, for a given relation type, global context can be formed by a set of representative instances together with their associated local neighborhoods. Such a collection provides a distributional view of how the relation is typically grounded in graph structure: which auxiliary relations and multi-hop paths tend to co-occur with it, how frequently they appear, and how they constrain plausible heads and tails. In this sense, relational universality and context are complementary: universality captures what relational structure can be reused, while local and global context determine where and how that structure should be instantiated within a particular graph. Enabling large-scale in-context learning for knowledge graphs is therefore a necessary step toward truly general KG foundation models.

    

\begin{figure}[t] 
  \centering
  \includegraphics[width=0.95\linewidth]{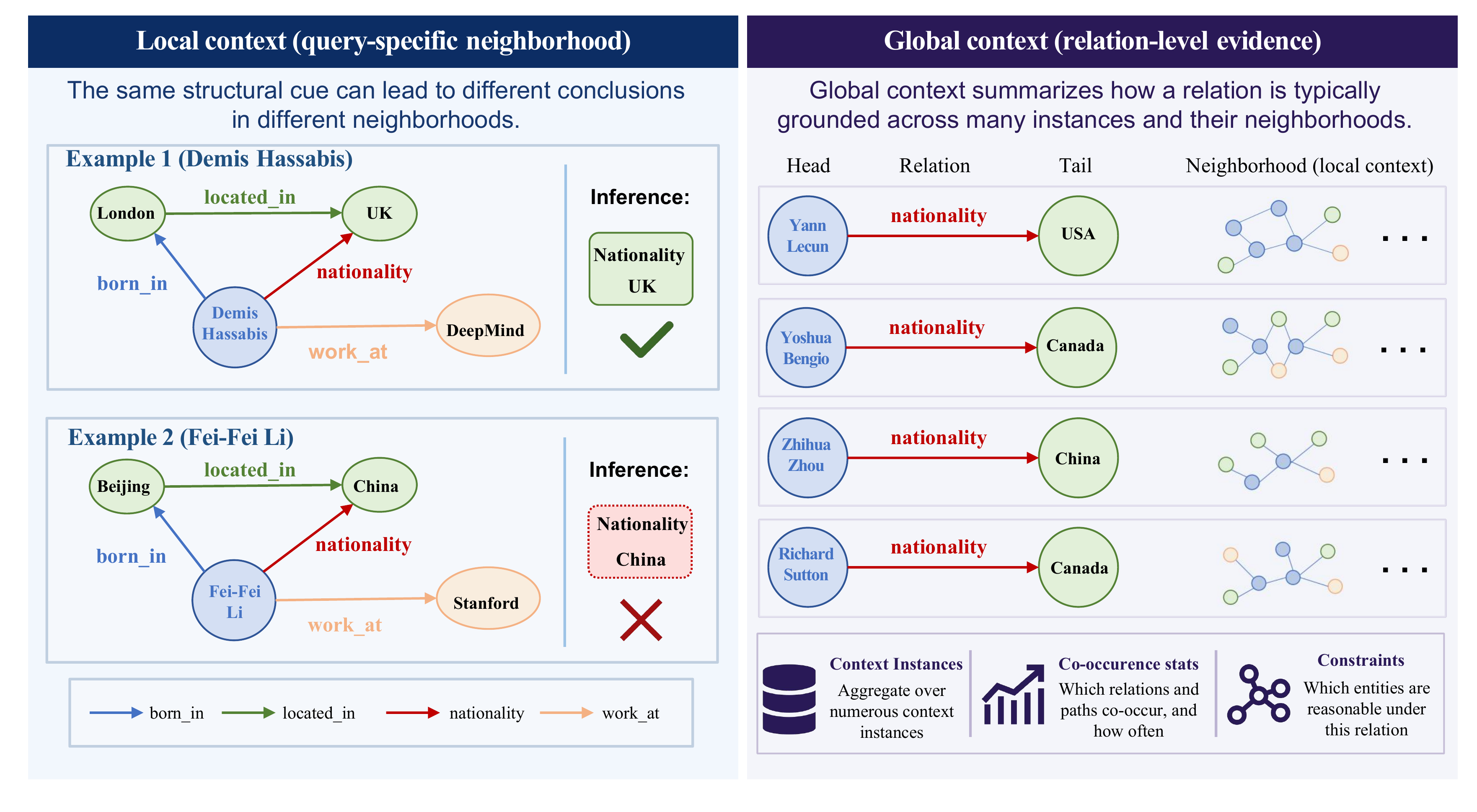} 
  \caption{(a) \textbf{Left: Local context.} The same structural cue may support different conclusions depending on the query-specific neighborhood.
(b) \textbf{Right: Global context.} Global context summarizes how a relation is typically instantiated across many examples, together with their associated neighborhoods.}
  \label{fig:intro}
\end{figure}

Based on the above analysis, we propose \textbf{KGPFN}, a model designed to unlock the potential of \textbf{K}nowledge \textbf{G}raph foundation model through in-context learning via \textbf{P}rior-data \textbf{F}itted \textbf{N}etwork. KGPFN integrates two complementary capabilities: learning relation-general regularities and utilizing the context at inference time. Concretely, we first perform message passing over the relation graphs to obtain relation representations that capture transferable, relation-level structure. To encode local context, we apply a multi-layer NBFNet to compute neighborhood-aware vectors for the query entities. To incorporate global context, we retrieve a large set of instances that share the query relation, together with their local neighborhoods, and model them within a PFN framework that combines feature-level attention with sample-level attention. Through multi-graph pre-training on diverse knowledge graphs, KGPFN learns to jointly exploit structural invariances and contextual cues, enabling zero-shot transfer and in-context adaptation to previously unseen graphs. The main contributions of this work include:
The main contributions of this work are summarized as follows:

\begin{itemize}
  \item \textbf{Local and global context construction for KGs.} 
  We introduce a context construction strategy tailored to knowledge graph foundation models. Unlike existing KG foundation models that mainly emphasize transferable relational patterns, our approach explicitly builds both local and global context: local context characterizes the query-specific neighborhood, while global context captures relation-level regularities across many instances of the query relation.

  \item \textbf{In-context learning over KG context via Prior-data Fitted Networks.} 
  We propose \modelname{}, which leverages the Prior-data Fitted Network paradigm to perform in-context learning over structured KG context. Through feature-level attention and sample-level attention, \modelname{} learns to exploit both intra-instance structural signals and inter-instance relational regularities, enabling inference-time adaptation to unseen graphs without fine-tuning.

  \item \textbf{State-of-the-art results with in-context learning alone.} 
  Extensive experiments on diverse KG benchmarks show that \modelname{} achieves state-of-the-art performance in both inductive and full-inductive settings. Remarkably, \modelname{} outperforms competitive fine-tuned KG foundation model baselines using only in-context learning, demonstrating the power of context-based adaptation for knowledge graph reasoning.
\end{itemize}

\section{Related Work}

\textbf{Knowledge graph foundation models.}
Current KGFMs largely aim to capture relation invariants that transfer across heterogeneous KGs, enabling fully-inductive, zero-shot completion on unseen entities and relations.
ULTRA~\cite{ultra} instantiates this idea by building a relation graph from four fundamental interaction patterns and learning transferable, query-conditioned relation representations.
TRIX~\cite{trix} further strengthens expressiveness by encoding which-entity participation in relation interactions and iteratively co-updating entity and relation embeddings, supporting efficient relation prediction.
From a unifying viewpoint, MOTIF~\cite{motif} shows that many KGFMs rely on binary motifs, and that lifting to higher-order motifs via relational hypergraphs yields provably stronger expressiveness.
In parallel, KG-ICL~\cite{kgicl} frames KG reasoning as in-context learning, using a few query-centered prompt graphs (e.g., 5-shot) plus a unified tokenizer to align entities/relations across KGs for universal generalization.

\textbf{In-context learning for structured data via Prior-data Fitted Networks.}
Prior-data Fitted Networks (PFNs)~\cite{pfn} formulate in-context learning as Bayesian inference over structured datasets.
A PFN is pretrained on synthetic tasks sampled from a prior, and learns to predict query labels from a set of labeled context examples in a single forward pass, without task-specific optimization.
TabPFN~\cite{tabpfnv1} instantiates this idea for tabular data by treating table rows as context examples, achieving strong small-data performance; recent variants such as TabPFN-v2.5~\cite{tabpfnv2.5}, TabICL~\cite{tabicl}, and Limix~\cite{limix} further improve robustness and scalability to larger tables.
More broadly, PFN-based models have begun to extend beyond tabular task to other structure data, including time-series~\cite{timepfn}, relational database~\cite{relationalpfn} and graph tasks~\cite{graphpfn,tfmlinker}.

\section{Preliminary}

\subsection{Task Definition}

\textbf{Notation.}
A knowledge graph (KG) is a directed multi-relational graph 
\(\mathcal{G}=(\mathcal{E},\mathcal{R},\mathcal{T})\), where \(\mathcal{E}\) denote the entity set, \(\mathcal{R}\) denote the relation set, and \(\mathcal{T}\subseteq \mathcal{E}\times\mathcal{R}\times\mathcal{E}\) denote the observed triple set respectively. Each triple \((h,r,t)\in\mathcal{T}\) represents a directed edge from the head entity \(h\) to the tail entity \(t\) under relation \(r\). 

\textbf{Problem Definition.}
In this paper, we study the link prediction task: given an incomplete query $(h,r,?)$, predict the missing tail entity $t$ from candidate entities. Formally, a model assigns a plausibility score $s(h,r,t)$ to each candidate $t\in\mathcal{E}$ and ranks candidates accordingly. We consider two evaluation settings. In the \emph{transductive} setting, the entity set is fixed across training and test; the model is trained on $\mathcal{T}_{\text{train}}$ and evaluated on held-out triples $\mathcal{T}_{\text{test}}$, where all entities appearing in $\mathcal{T}_{\text{test}}$ also appear in $\mathcal{T}_{\text{train}}$, allowing the model to learn entity-specific representations. In the \emph{inductive} setting, the test graph $\mathcal{G}_{\text{test}}=(\mathcal{E}_{\text{test}},\mathcal{R},\mathcal{T}_{\text{test}})$ contains entities unseen during training, i.e., $\mathcal{E}_{\text{test}}\cap \mathcal{E}_{\text{train}}=\emptyset$; the model must generalize to these new entities based on relational structure without any parameter updates.

\subsection{Prior-data Fitted Networks}

\textbf{Posterior predictive distribution.}
In the Bayesian view of supervised learning, a prior specifies a hypothesis space $\Phi$ that maps inputs $x$ to labels $y$. Each hypothesis $\phi\in\Phi$ induces a data-generating distribution, from which a dataset $D=\{(x_i,y_i)\}_{i=1}^{n}$ can be sampled. Given a context (training) set $D$ and a test input $x$, the posterior predictive distribution (PPD) marginalizes over hypotheses:
\begin{equation}
p(y \mid x, D)\ \propto\ \int_{\Phi} p(y \mid x,\phi)\, p(D\mid \phi)\, p(\phi)\, d\phi,
\label{eq:ppd}
\end{equation}
where $p(\phi)$ is the prior weight of hypothesis $\phi$ and $p(D\mid \phi)$ is the likelihood of observing $D$ under $\phi$.

\textbf{Prior-data fitted networks for structured-data foundation models.}
Prior-data fitted networks (PFNs) cast in-context learning as amortized Bayesian inference: a single parametric model is trained to output the posterior predictive distribution conditioned on a set of labeled context examples. Concretely, given a labeled support set $\mathcal{C}=\{(x_c^{(i)},y_c^{(i)})\}_{i=1}^{n}$ and a query input $x_q$, a PFN produces a predictive distribution over the query label:
\begin{equation}
f_\theta(x_q,\mathcal{C})\ \approx\ p(y_q \mid x_q, \mathcal{C}),
\label{eq:pfn_approx}
\end{equation}
At inference time, a PFN approximates Bayesian-style posterior prediction by treating the context set $\mathcal{C}$ as observed evidence and conditioning on it to produce a predictive distribution for the query. In this view, the pretrained network encodes an implicit prior over tasks, while the observed context induces an approximate posterior predictive distribution, which achieved purely through feed-forward computation without any parameter updates.

\section{Method}
\label{sec:method}

\begin{figure*}[t]
  \centering
  \includegraphics[width=\linewidth]{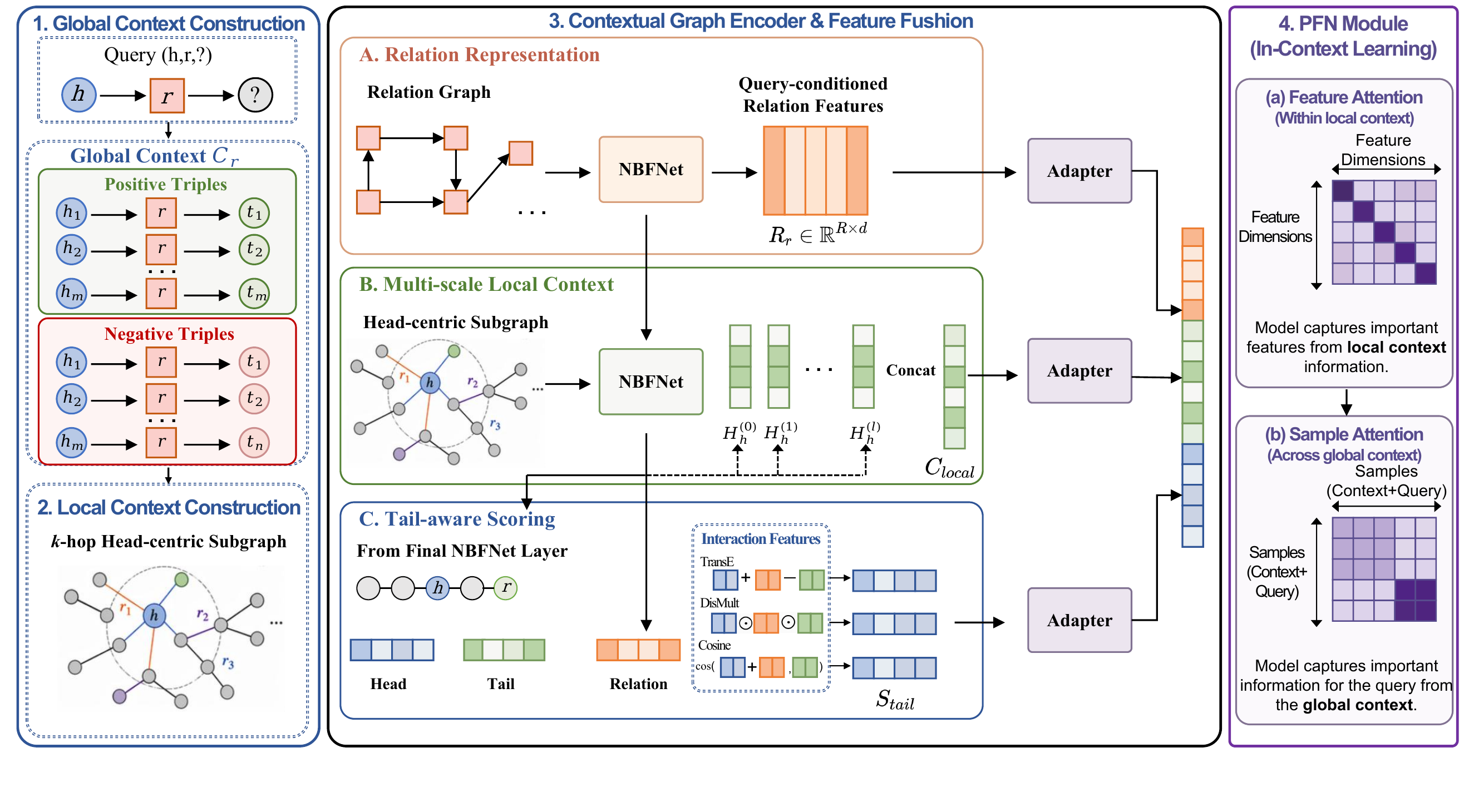} 
  \caption{The framework of KGPFN. }
  \label{fig:kgpfn_framework}
\end{figure*}

In this section, we introduce our KGPFN, a knowledge graph foundational model with in-context learning capabilities. The framework of KGPFN is shown in Fig.~\ref{fig:kgpfn_framework}.  In Section~\ref{sec:method:global} and~\ref{sec:method:local}, we introduced how to construct global and local contexts respectively. In Section~\ref{sec:method:encoder}, we presented how to encode the obtained contexts. In Section~\ref{sec:method:pfn}, we introduced the PFN architecture used, which consists of feature attention and sample attention.

\subsection{Global Context Construction}
\label{sec:method:global}

To expose the model to typical grounding patterns of a specific relation, we first construct a relation-specific global context. For a given query \((h,r,?)\), we collect a set of observed triples of the same relation from the graph, \(\{(h_i,r,t_i)\}_{i=1}^{m^+}\), so that the context reflects how \(r\) is realized in the data. To additionally indicate when a candidate completion should not hold, we generate a set of negative triples for the same relation, \(\{(h_j,r,t_j)\}_{j=1}^{m^-}\), via negative sampling. We merge them into a single global context set \(\mathcal{C}_r=\{((h_k,r,t_k),y_k)\}_{k=1}^{m}\), where \(m=m^+ + m^-\) and \(y_k\in\{0,1\}\) is the binary label with \(y_k=1\) for positives and \(y_k=0\) for negatives. This global context provides macroscopic relational patterns for the task.

\subsection{Local Context Construction}
\label{sec:method:local}

While the global context provides relation-level priors, local structural information is crucial for disambiguating specific entities. For each individual triple (including the query and the global context triples), we extract its local context. A straightforward way to construct local context for a candidate triple \((h,r,t)\) is to build an enclosing subgraph around both \(h\) and \(t\). However, jointly extracting \(\mathcal{N}_k(h)\cup \mathcal{N}_k(t)\) for every candidate \(t\) is prohibitively expensive for large-scale training and inference. To enable efficient batching and fast querying, we instead construct local context only from the head entity by extracting a \(k\)-hop neighborhood around \(h\). 
That is to say, given a triple with head \(h\), we define the head-centric \(k\)-hop neighborhood as \(\mathcal{V}_h^{(k)}=\mathcal{N}_k(h)\) and \(\mathcal{G}_h^{(k)}= \mathcal{G}\left[\mathcal{V}_h^{(k)}\right]\), where \(\mathcal{N}_k(\cdot)\) denotes the set of nodes within \(k\) hops. This construction is independent of the candidate tail \(t\), and thus scales naturally to large candidate sets.

\subsection{Contextual Graph Encoder}
\label{sec:method:encoder}

With the global and local contexts constructed, we employ a unified encoder to map these symbolic structures into continuous vector spaces. 

\textbf{Relation representations.}
Following previous methods~\cite{ultra,trix}, we first build the relation graph \(\mathcal{G}_r=(\mathcal{R},\mathcal{R}_{\mathrm{fund}},\mathcal{E}_r)\) from the original KG, where each node corresponds to a relation type in \(\mathcal{R}\). Edges in \(\mathcal{E}_r\) encode how two relations interact through shared endpoints. We consider four fundamental interaction types \(\mathcal{R}_{\mathrm{fund}}=\{\mathrm{t2h},\mathrm{h2h},\mathrm{h2t},\mathrm{t2t}\}\). Equivalently, the relation graph can be represented by an adjacency tensor \(\mathbf{A}_r\in\mathbb{R}^{|\mathcal{R}|\times|\mathcal{R}|\times 4}\).

Given a query relation \(r\), we compute conditional relation representations by initializing the query-relation node in the relation graph \(\mathcal{G}_r\) with an indicator label (set \(\mathbf{h}^{(0)}_{r\mid r}=\mathbf{1}^d\) and \(\mathbf{h}^{(0)}_{v\mid r}=\mathbf{0}\) for all \(v\neq r\)), and then running message passing on \(\mathcal{G}_r\). Each layer aggregates messages from neighbors connected by the four fundamental interaction types \(s\in\mathcal{R}_{\mathrm{fund}}\):
\begin{equation}
\mathbf{h}_{v\mid r}^{(\ell+1)} \;=\; \mathrm{Upd}\!\left(
\mathbf{h}_{v\mid r}^{(\ell)},
\;\;
\mathrm{Agg}\Big(\big\{\mathrm{Msg}_s(\mathbf{h}_{u\mid r}^{(\ell)}) :
u\in \mathcal{N}_s(v),\ s\in \mathcal{R}_{\mathrm{fund}}\big\}\Big)
\right),
\label{eq:rel_mp}
\end{equation}
where \(\ell\) indexes the message-passing layer. After \(L_r\) layers, we obtain a matrix of query-conditioned relation embeddings \(\mathbf{R}_{r}\;=\;\big[\mathbf{h}_{v\mid r}^{(L_r)}\big]_{v\in\mathcal{R}}\in\mathbb{R}^{|\mathcal{R}|\times d}\).

\textbf{Multi-scale local context representation.}
Then, we encode the head-centric neighborhood \(\mathcal{G}_h^{(k)}\) using an \(L\)-layer NBFNet, conditioned on the query relation embedding \(\mathbf{r}\). Let \(\mathbf{H}_v^{(\ell)}\) be the node representation at layer \(\ell\). Each layer propagates relation-aware messages along typed edges:
\begin{equation}
\mathbf{H}_v^{(\ell+1)} \;=\; \mathrm{Upd}\!\left(
\mathbf{H}_v^{(\ell)},
\;\;
\mathrm{Agg}\Big(\big\{\mathrm{Msg}(\mathbf{H}_u^{(\ell)}, r_{u\!\to\! v}, \mathbf{r}) :
(u,r_{u\!\to\! v},v)\in\mathcal{T}\cap \mathcal{G}_h^{(k)}\big\}\Big)
\right).
\label{eq:nbfnet_update_head}
\end{equation}

An important property of NBFNet is that the representation after \(\ell\) layers aggregates evidence from up to \(\ell\) hops. Therefore, we directly treat the head node embedding at each layer as a subgraph summary vector. Formally, we set \(\mathbf{c}_{\mathrm{loc}}^{(\ell)}(h,r)=\mathbf{H}_{h}^{(\ell)}\), and form the multi-scale local-context sequence \(\mathbf{C}_{\mathrm{loc}}(h,r)=[\mathbf{c}_{\mathrm{loc}}^{(1)};\cdots;\mathbf{c}_{\mathrm{loc}}^{(l)}]\).

\textbf{Tail-aware scoring representation.}
To distinguish different tail entities efficiently, we additionally retrieve the tail embedding at the last NBFNet layer, denoted as \(\mathbf{z}_{t}^{(L)}\), and apply a lightweight structure feature enhancement on top of it to better encode its structural signal within the \(k\)-hop neighborhood. We then combine the query-specific head representation \(\mathbf{z}_{h}^{(L)}\), the query relation embedding \(\mathbf{r}\), and the tail representation \({\mathbf{z}}_{t}^{(L)}\) to form interaction features:
\begin{equation}
\phi_{\mathrm{transe}}(h,r,t)=\mathbf{z}_{h}^{(L)}+\mathbf{r}-{\mathbf{z}}_{t}^{(L)},
\phi_{\mathrm{distmult}}(h,r,t)=\mathbf{z}_{h}^{(L)}\odot\mathbf{r}\odot{\mathbf{z}}_{t}^{(L)},
\phi_{\mathrm{cos}}(h,r,t)=\cos\!\big(\mathbf{z}_{h}^{(L)}+\mathbf{r},\,{\mathbf{z}}_{t}^{(L)}\big)
\end{equation}
We then obtain the tail-aware score representation by combining the above three interaction features: \(S_{\mathrm{tail}}(h,r,t) =[ \phi_{\mathrm{transe}}(h,r,t),\phi_{\mathrm{distmult}}(h,r,t),\phi_{\mathrm{cos}}(h,r,t)]\).
For each context pair \(((h_k,r,t_k),y_k)\in \mathcal{C}_r\) as well as the query, we align the three heterogeneous features generated by the encoder into a shared latent space via three lightweight adapters and concatenate them to form a single fused embedding:
\begin{equation}
\mathbf{x}_k
=
\Big[
\mathrm{Adapter}\!\big(\mathbf{C}_{\mathrm{loc}}(h_k,r)\big)
\;;\;
\mathrm{Adapter}\!\big([\mathbf{z}_{h_k}^{(L)};\mathbf{r};\mathbf{z}_{t_k}^{(L)}]\big)
\;;\;
\mathrm{Adapter}\!\big(S_{\mathrm{tail}}(h_k,r,t_k)\big)
\Big],
\label{eq:context_adapter_fuse}
\end{equation}
where \(\mathrm{Adapter}(\cdot)\) is a lightweight projection module implemented as an MLP followed by normalization.

\subsection{PFN Architecture}
\label{sec:method:pfn}

After fusing the representations in Eq.~\eqref{eq:context_adapter_fuse}, we estimate the plausibility score \(s(h,r,t)\) by feeding the query representation \(\mathbf{x}_q\) and the relation-specific labeled context set \(\mathcal{C}_r=\{(\mathbf{x}_i,y_i)\}_{i=1}^{m}\) into the PFN architecture defined in Eq.~\eqref{eq:pfn_approx}, where \(y_i\in\{0,1\}\) denotes whether the corresponding context triple is positive or negative.
The PFN module consists of two key attention mechanisms: feature-level self-attention, which refines the feature dimensions of each sample, and sample-level attention, which aggregates evidence across context examples. Specifically, we first apply self-attention over \(\mathcal{C}_r\) to capture dependencies among context samples. We then use cross-attention from the query representation \(\mathbf{x}_q\) to the context set \(\mathcal{C}_r\), allowing the query to retrieve relation-specific evidence from the labeled context. RoPE~\cite{rope} is incorporated into the attention blocks as positional encoding. Finally, the query-conditioned contextual evidence is combined with \(\mathbf{x}_q\) to produce the plausibility score \(s(h,r,t)\).

\textbf{Training Objective}.
We train KGPFN with two complementary objectives under negative sampling.
First, we use a binary cross-entropy (BCE) loss to directly separate positives from negatives and encourage discriminative plausibility scores:
\begin{equation}
\mathcal{L}_{\mathrm{BCE}}
=
-\frac{1}{B}\sum_{i=1}^{B}
\left[
\log \sigma\!\left(s_i^{+}\right)
+
\sum_{j=1}^{N} w_{i,j}\,\log\!\left(1-\sigma\!\left(s_{i,j}^{-}\right)\right)
\right],
\label{eq:loss_bce}
\end{equation}
where $\sigma(\cdot)$ is the sigmoid function and $w_{i,j}=\frac{\exp(s_{i,j}^{-}/\tau)}{\sum_{k=1}^{N}\exp(s_{i,k}^{-}/\tau)}$ is a hard adapative weight for the $j$-th negative 
with temperature $\tau$.
Second, we additionally adopt a softmax cross-entropy loss to explicitly enforce that the positive triple is ranked above its sampled negatives:
\begin{equation}
\mathcal{L}_{\mathrm{CE}}
=
-\frac{1}{B}\sum_{i=1}^{B}
\log
\frac{\exp\!\left(s_i^{+}\right)}
{\exp\!\left(s_i^{+}\right)+\sum_{j=1}^{N}\exp\!\left(s_{i,j}^{-}\right)}.
\label{eq:loss_ce}
\end{equation}

Finally, the overall training objective is the sum of them $\mathcal{L}=\mathcal{L}_{\mathrm{BCE}}+ \mathcal{L}_{\mathrm{CE}}$.

\subsection{Theoretical Insight}
\label{sec:method:theory}

We provide a theoretical insight into how KGPFN extends relational modeling from
\textbf{structural expressivity} to \textbf{functional expressivity}. Prior KG foundation models~\cite{ultra,trix,motif} primarily focus on learning structure-level invariants, what motifs or relational patterns are present around a triple, and thus excel at capturing transferable structural evidence. However, zero-shot reasoning on unseen relations additionally requires how such motifs should be combined into a relation-specific decision rule. KGPFN addresses this by leveraging the labeled global context to infer the functional relevance of local structural patterns and compose them into an adaptive scoring function.

\begin{theorem}[Functional Expressivity via In-context Composition]
\label{thm:context_matching}
Let \(q\) be a query triple and \(\mathcal{S}_r\) be the in-context support set of triple pairs for a relation \(r\). Let \(c_{M_k}(\cdot)\) denote the structural count of the \(k\)-th topological structural pattern/motif \(M_k\) extracted by the encoder. Assume the PFN attention projection matrices satisfy \(\mathbf{W}_Q^\top \mathbf{W}_K = \mathbf{I}\), and the encoder's latent motif basis is orthogonal such that \(\mathbf{W}_{\mathcal{M}}^\top \mathbf{W}_{\mathcal{M}} = \mathbf{\Lambda} = \mathrm{diag}(\lambda_1, \dots, \lambda_n)\). Under a linear attention regime, the PFN instantiates a context-conditioned scoring function, where relation-specific evidence is composed on the fly from the query and the in-context support set:
\begin{equation}
    P(q \in r \mid \mathcal{S}_r) \propto \sum_{k=1}^{n} \lambda_k \cdot c_{M_k}(q) \cdot \mathbb{E}_{s \sim \mathcal{S}_r}[c_{M_k}(s)]
\end{equation}
where the term \(c_{M_k}(q)\,\mathbb{E}_{s \sim \mathcal{S}_r}[c_{M_k}(s)]\) can be interpreted as a soft conjunction: a structural pattern contributes strongly only when it is present in the query and salient in the in-context support set.
\end{theorem}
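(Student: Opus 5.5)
The plan is to make the idealized assumptions of the theorem explicit and then compute the linear-attention output directly. First I will model the encoder as linear in the motif counts. For any triple \(x\) (the query \(q\) or a support element \(s\in\mathcal{S}_r\)), the fused embedding of Eq.~\eqref{eq:context_adapter_fuse} is taken to be \(\mathbf{x}=\mathbf{W}_{\mathcal{M}}\,\mathbf{c}(x)\), where \(\mathbf{c}(x)=(c_{M_1}(x),\dots,c_{M_n}(x))^\top\). This is the reading under which ``the encoder's latent motif basis'' is meaningful: the columns of \(\mathbf{W}_{\mathcal{M}}\) are the latent directions of the motifs, and NBFNet aggregation over paths of length \(\le L\) produces a combination of them weighted by the counts. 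Positional effects (RoPE) and the adapters' normalization are treated as absorbed into \(\mathbf{W}_{\mathcal{M}}\), or neglected in this regime.

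Next, I will write the cross-attention from the query to the labeled context under linear attention, with no softmax and the normalizer \(1/|\mathcal{S}_r|\):
\[
\mathrm{out}(q)=\frac{1}{|\mathcal{S}_r|}\sum_{s\in\mathcal{S}_r}\big(\mathbf{W}_Q\mathbf{x}_q\big)^\top\big(\mathbf{W}_K\mathbf{x}_s\big)\,\mathbf{v}_s .
\]
Here \(\mathbf{v}_s\) is the value vector, which carries the label \(y_s\). I will take \(\mathcal{S}_r\) to be the positive part of the support, or equivalently read the score off the component of \(\mathbf{v}_s\) that encodes \(y_s=1\), so that the readout head contributes a constant factor. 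Substituting \(\mathbf{W}_Q^\top\mathbf{W}_K=\mathbf{I}\) turns the attention weight into \(\mathbf{x}_q^\top\mathbf{x}_s=\mathbf{c}(q)^\top\mathbf{W}_{\mathcal{M}}^\top\mathbf{W}_{\mathcal{M}}\mathbf{c}(s)=\mathbf{c}(q)^\top\mathbf{\Lambda}\,\mathbf{c}(s)=\sum_k\lambda_k c_{M_k}(q)c_{M_k}(s)\).

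Averaging over \(s\) and using linearity to move the sum over \(k\) outside the empirical expectation then gives
\[
\sum_k\lambda_k\,c_{M_k}(q)\,\mathbb{E}_{s\sim\mathcal{S}_r}[c_{M_k}(s)].
\]
The final score is a monotone readout of this quantity. In the linear regime it is linear up to an additive constant from the residual path \(\mathbf{x}_q\), which does not depend on the support. So \(P(q\in r\mid\mathcal{S}_r)\) is proportional to the displayed sum, in the sense of a score or an unnormalized logit. The soft-conjunction interpretation follows because each term is a product of nonnegative counts and vanishes if either factor vanishes.

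The main obstacle is justifying the linear-encoder model \(\mathbf{x}=\mathbf{W}_{\mathcal{M}}\mathbf{c}(x)\) and the passage from a score to a probability. The actual encoder is nonlinear, and the sigmoid is not proportional to its argument. I would handle this by stating both points as modelling assumptions implicit in the phrases ``under a linear attention regime'' and ``\(\propto\)''. I would also remark that the self-attention layers over \(\mathcal{C}_r\) and the feature-level attention are taken to act as identity, or to be absorbed into \(\mathbf{W}_{\mathcal{M}}\), in this idealized setting.
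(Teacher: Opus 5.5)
Your proposal is correct and follows essentially the same route as the paper's proof: drop the softmax to get the bilinear score, substitute \(\mathbf{W}_Q^\top\mathbf{W}_K=\mathbf{I}\) and the motif-basis representation to obtain \(\sum_k\lambda_k\,c_{M_k}(q)\,c_{M_k}(s)\), assume the value vectors of positive support samples project to a common constant on the classifier direction, and then average over \(\mathcal{S}_r\). The only real difference is that the paper justifies \(\mathbf{z}\approx\mathbf{W}_{\mathcal{M}}\Phi\) (up to \(\mathcal{O}(\epsilon)\)) through its motif-basis projection Lemma for the conditional MPNN, whereas you posit the linear encoder model directly as an assumption.
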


Theorem~\ref{thm:context_matching} shows that KGPFN scores a query by aligning its local structural evidence with label-weighted pattern statistics inferred from the in-context support set. Consequently, structural patterns are not assigned fixed, relation-specific weights; rather, their functional relevance is estimated on the fly from context, enabling adaptation to unseen relations without parameter updates.

\section{Experiments}

\subsection{Experimental Setup}
\label{sec:expsetup}
\textbf{Datasets and Metrics.}
Following~\cite{ultra,trix}, we evaluate on 57 knowledge graphs from diverse domains under three settings: \emph{Transductive} (16 datasets), where all test-time entities and relations are seen during training; \emph{Inductive} (18 datasets), where test-time entities are unseen but relations are seen during training; and \emph{Fully Inductive} (23 datasets), where both entities and relations are unseen at test time. 
We adopt filtered ranking metrics. For each test triple $(h,r,t)$, we rank the tail entity against all candidates, filtering out other valid triples. We report the average Mean Reciprocal Rank (MRR) and Hits@10 after 5 runs. For more details about the datasets, please see Appendix~\ref{app:dataset}.

\textbf{Baselines.}
We compare KGPFN with several advanced KG foundation models, including \textbf{ULTRA}~\cite{ultra}, a universal pretraining framework for transferable inductive KG reasoning across diverse graphs and relation vocabularies; \textbf{KG-ICL}~\cite{kgicl}, a prompt-based KG foundation model that performs KG reasoning via in-context learning over retrieved triples; \textbf{TRIX}~\cite{trix}, an expressive fully-inductive model designed for zero-shot domain transfer in KG completion; and \textbf{MOTIF}~\cite{motif}, a motif-based KG foundation model that captures higher-order relational interaction patterns beyond pairwise relation compositions.

\textbf{Implementation Details.} 
For the contextual graph encoder, we follow the design of~\citep{ultra}: the relation encoder and the local-context encoder each use a 6-layer NBFNet with hidden dimension 64. 
For the PFN transformer, we adopt the pretrained TabICL~\cite{tabiclv2} architecture and remove its feature preprocessing module. 
For local context, we extract the query-centered \(3\)-hop enclosing subgraph around the head entities.
For global context,  we include 20 positive triples and 60 negative triples.
We optimize the model with AdamW using a learning rate of \(5\times10^{-4}\) and weight decay \(10^{-6}\). 
During training, we sample 64 negatives per positive query triple; during evaluation, we rank against all entities. 
We use adversarial negative sampling with temperature \(\tau=1.0\) to weight hard negatives. 
All experiments are run on 8 NVIDIA A800 80GB GPUs. 

\subsection{In-context Inference}
Following prior KG foundation model setups~\cite{ultra,trix,motif}, we pretrain KGPFN on three graphs: FB15K-237~\cite{fb15k}, WN18RR~\cite{wn18rr}, and CodexMedium~\cite{codex}, and compare it with several state-of-the-art KG foundation models. Since KG-ICL is pretrained on a different set of graphs, we do not report its results on datasets overlapping with these three pretraining graphs. Finally, as KGPFN retrieves relation-relevant positive and negative instances from the target graph as context at inference time, no additional finetuning is performed; all results are obtained by directly applying in-context inference after pretraining.

The results have been reported in Table~\ref{tab:avg_57kgs}. Overall, KGPFN achieves the best average performance across 57 knowledge graphs. It obtains state-of-the-art Hits@10 under all three evaluation settings, and reaches state-of-the-art MRR in the two inductive settings. Notably, in the full-inductive regime, the most challenging setting, KGPFN’s advantage becomes even more pronounced. As the task becomes harder, from transductive to inductive and full-inductive settings, the performance gap between KGPFN and competing models consistently widens. These results suggest that our model can effectively leverage in-context reasoning, outperforming competing methods even when they are finetuned on each target dataset. Detailed results of our model and additional results with alternative PFN architectures~\cite{limix} are provided in Appendix~\ref{app:moreresults}.

\begin{table*}[t]
  \centering
  \small
  \caption{Average MRR and Hits@10 over 57 KGs. (\textbf{Bold}: best; \underline{Underline}: runner-up.)}
  \label{tab:avg_57kgs}
  \begin{tabular}{l l cc cc cc cc cc}
    \toprule
    \multirow{2}{*}{Setting} & \multirow{2}{*}{Model}
   & \multicolumn{2}{c}{\makecell{Inductive $e,r$\\(23 graphs)}}
& \multicolumn{2}{c}{\makecell{Inductive $e$\\(18 graphs)}}
& \multicolumn{2}{c}{\makecell{Transductive\\(16 graphs)}}
& \multicolumn{2}{c}{\makecell{Total Avg\\(57 graphs)}}
& \multicolumn{2}{c}{\makecell{Pretraining\\(3 graphs)}} \\
    \cmidrule(lr){3-4}\cmidrule(lr){5-6}\cmidrule(lr){7-8}\cmidrule(lr){9-10}\cmidrule(lr){11-12}
      & & MRR & H@10 & MRR & H@10 & MRR & H@10 & MRR & H@10 & MRR & H@10 \\
    \midrule

    \multirow{4}{*}{Zero-shot}
      & ULTRA & 0.345 & 0.513 & 0.431 & 0.566 & 0.329 & 0.479 & 0.367 & 0.520 & -- & -- \\
      & KGICL & 0.362& 0.544 & 0.440 & 0.584 & 0.346 & 0.493 & 0.382 & 0.542 & -- & -- \\  
    & Motif & 0.337 & 0.509 & 0.431 & 0.570 & 0.335 & 0.492 & 0.370 & 0.527 & -- & -- \\
      & TRIX  & 0.368 & 0.540 & 0.455 & 0.592 & 0.342 & 0.522 & 0.388 & 0.551 & -- & -- \\
    \midrule
    \multirow{4}{*}{Fine-tuned}
        & ULTRA & 0.397 & 0.556 & 0.442 & 0.582 & 0.384 & 0.548 & 0.407 & 0.562 & 0.407 & 0.568 \\
      & KGICL & \underline{0.426} & \underline{0.616} & \underline{0.464} & \underline{0.630} & \textbf{0.397} & 0.554 & \underline{0.430} & \underline{0.603} & -- & -- \\  
    & Motif & 0.401 & 0.558 & 0.455 & 0.594 & 0.390 & 0.549 & 0.415 & 0.569 & 0.415 & 0.565 \\
      & TRIX  & 0.401 & 0.556 & 0.459 & 0.594 & 0.390 & \underline{0.558} & 0.418 & 0.569 & 0.415 & 0.563 \\
    \midrule
    \multirow{1}{*}{In-context}
     & KGPFN  & \textbf{0.433} & \textbf{0.639} & \textbf{0.465} & \textbf{0.638} & \underline{0.393} & \textbf{0.600} & \textbf{0.432} & \textbf{0.628} & \textbf{0.423} & \textbf{0.623} \\
    \bottomrule
  \end{tabular}
\end{table*}


\subsection{Context Sensitivity Analysis}
We further analyzed the impact of different numbers of contexts on the model's performance.

\textbf{Global Context.} We evaluate a range of global-context configurations. Specifically, we vary the number of positive instances (5, 10, 15, and 20) and the number of negative instances (20, 40, 60, and 80), and consider all pairwise combinations. We report the average MRR across 23 full-inductive tasks, as shown in the Fig~\ref{fig:global_context}.
Overall, changing the number of positive instances has no noticeable effect on performance, indicating that KGPFN is robust to the amount of positive context. In contrast, increasing the number of negative instances consistently improves performance. This suggests that the model already distinguishes positive instances well; however, for ranking-based metrics such as MRR, separating hard negative candidates is often more critical. Consequently, providing more negative instances in the global context yields more reliable rankings and better overall results.

\textbf{Local Context.}  We evaluate the effect of local context by varying the maximum neighborhood radius for extracting subgraph vectors from 0 to 3 hops. As shown in Fig.~\ref{fig:local_context}, we observe a trend consistent with the context analysis in KGICL~\cite{kgicl}: using only 1-hop or 2-hop neighborhoods leads to varying degrees of performance degradation. This suggests that shallow local context may provide insufficient support for effective reasoning. Intuitively, for many target relations, one- or two-hop information is often inadequate to capture compositional evidence formed by multiple relations, which typically requires multi-hop neighborhoods.
\begin{figure}[t]
    \centering
    \begin{subfigure}{0.49\linewidth}
        \centering
        \includegraphics[width=\linewidth]{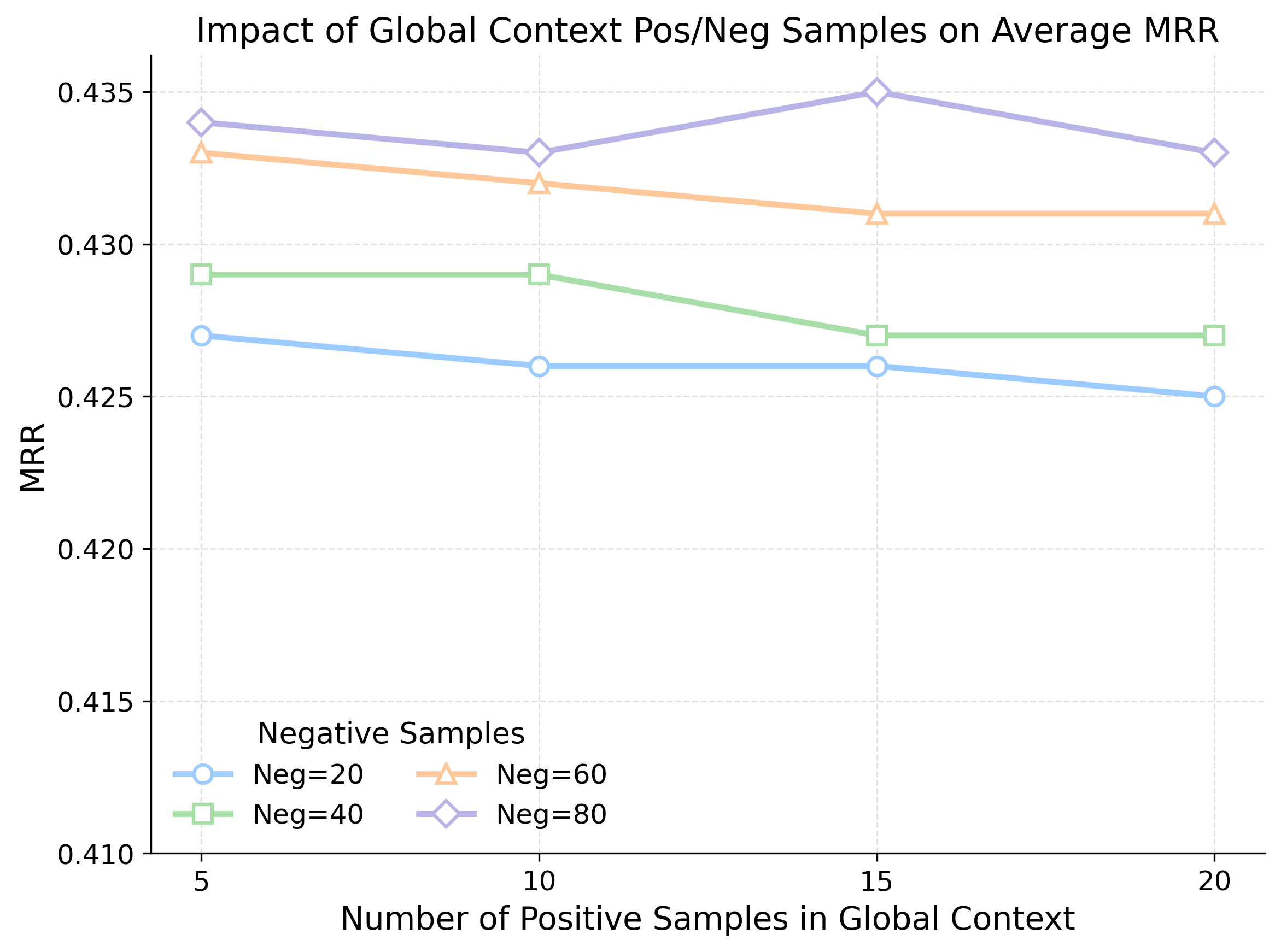}
        \caption{Impact of Global Context.}
        \label{fig:global_context}
    \end{subfigure}
    \hfill
    \begin{subfigure}{0.49\linewidth}
        \centering
        \includegraphics[width=\linewidth]{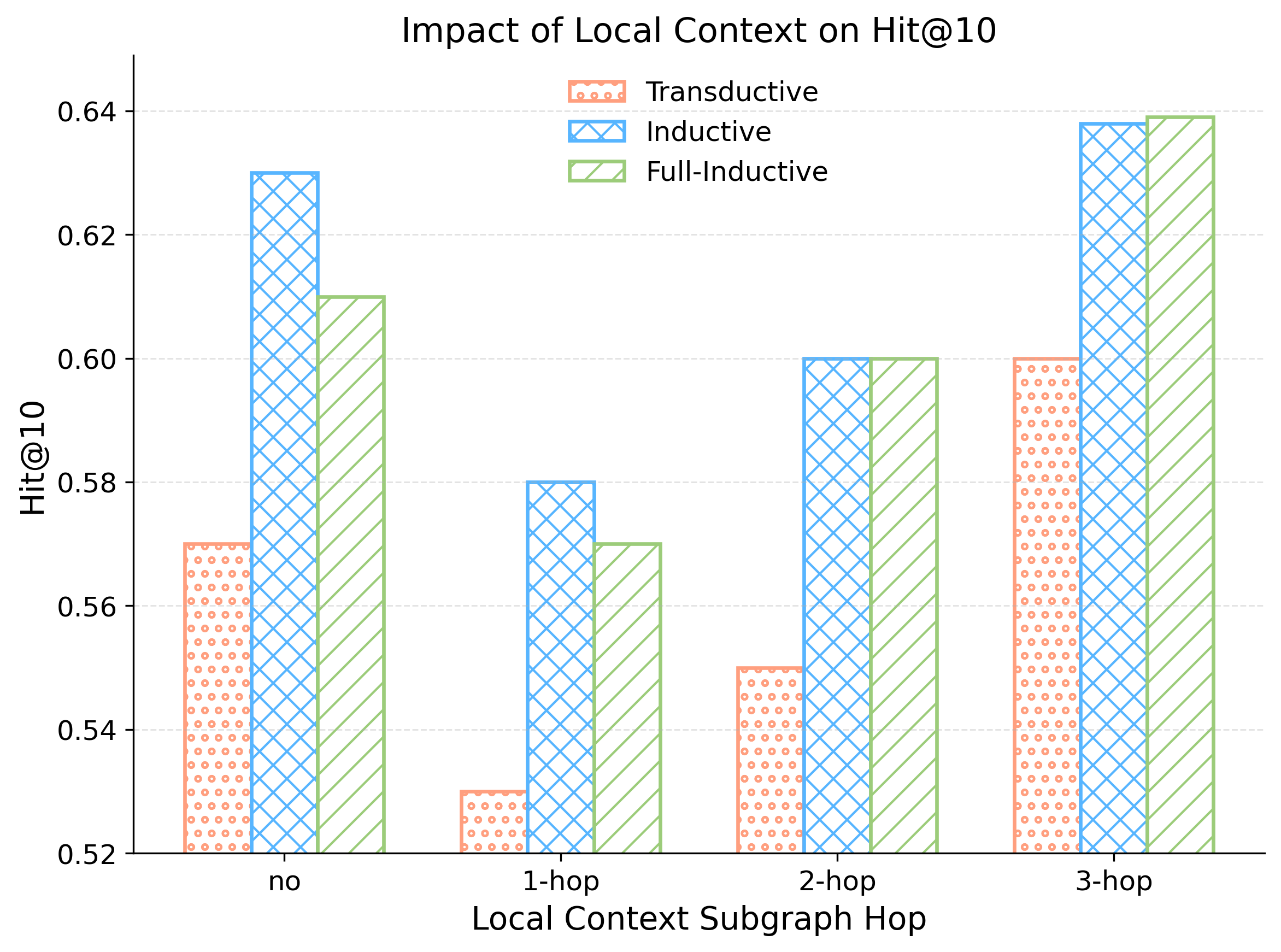}
        \caption{Impact of Local Context.}
        \label{fig:local_context}
    \end{subfigure}
    
    \caption{Global and Local Context Sensitivity Analysis.}
    \label{fig:context}
\end{figure}

\subsection{Visualization}
\begin{wrapfigure}{r}{0.42\textwidth}
  \centering
  \vspace{-0.5\baselineskip} 
  \includegraphics[width=0.40\textwidth]{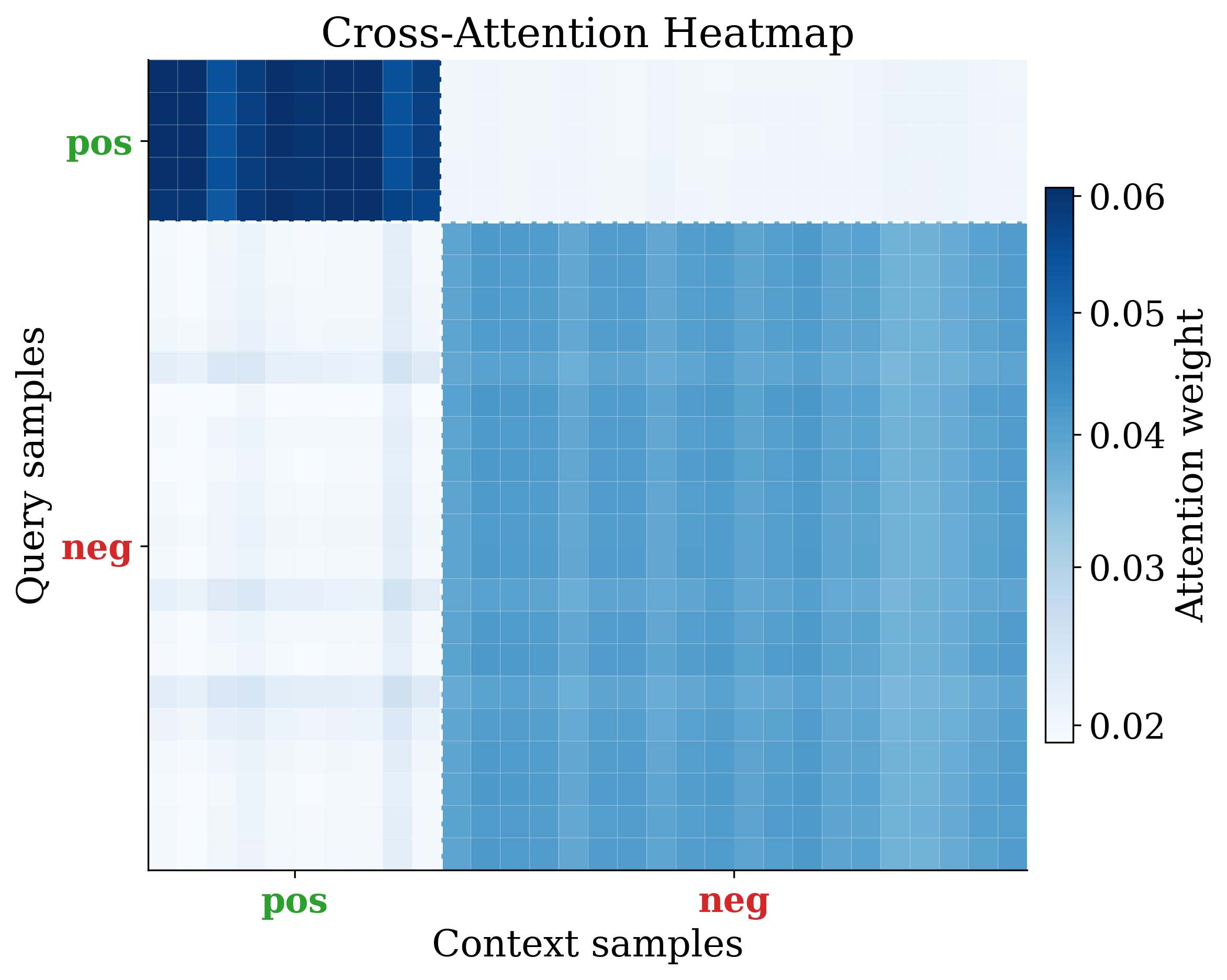}
  \caption{The heatmap of sample attention.}
  \label{fig:visualization}
  \vspace{-0.5\baselineskip} 
\end{wrapfigure}
To intuitively illustrate how the model selects informative positive and negative instances from the global context, we present a case study on CodexLarge~\cite{codex} with the query (\texttt{Imperial}, \texttt{-Record label}, ?). We construct a candidate set with 5 ground-truth tails and 20 incorrect tails, and sample a global context containing 5 positive and 20 negative instances.
We visualize the cross-attention weights from the last layer of the sample-attention module, showing how each query candidate attends to the context instances. As shown in Fig.~\ref{fig:visualization}, correct tail candidates place highly concentrated attention on positive context instances, whereas incorrect candidates show more dispersed attention over negative context instances. These results suggest that KGPFN effectively exploits global context during inference to guide and sharpen query predictions.

\section{Conclusion}
We study in-context learning for knowledge graph foundation models and argue that effective in-context reasoning requires both query-specific local context and relation-level global context. 
To this end, we propose \modelname{}, which combines a local context encoder with a PFN-based global-context aggregator to produce relation-adaptive predictions at inference time. 
Extensive experiments show state-of-the-art performance in both inductive and full-inductive settings, demonstrating the benefits of unifying global and local context for KG in-context adaptation.

\bibliographystyle{plain}
\bibliography{ref}

@article{gpt4,
  author       = {OpenAI},
  title        = {{GPT-4} Technical Report},
  journal      = {CoRR},
  volume       = {abs/2303.08774},
  year         = {2023},
  url          = {https://doi.org/10.48550/arXiv.2303.08774},
  doi          = {10.48550/ARXIV.2303.08774},
  eprinttype   = {arXiv},
  eprint       = {2303.08774},
  bibsource    = {dblp computer science bibliography, https://dblp.org}
}

@article{deepseek,
  title={Deepseek-v3 technical report},
  author={Liu, Aixin and Feng, Bei and Xue, Bing and Wang, Bingxuan and Wu, Bochao and Lu, Chengda and Zhao, Chenggang and Deng, Chengqi and Zhang, Chenyu and Ruan, Chong and others},
  journal={arXiv preprint arXiv:2412.19437},
  year={2024}
}

@article{llama,
  title={Llama: Open and efficient foundation language models},
  author={Touvron, Hugo and Lavril, Thibaut and Izacard, Gautier and Martinet, Xavier and Lachaux, Marie-Anne and Lacroix, Timoth{\'e}e and Rozi{\`e}re, Baptiste and Goyal, Naman and Hambro, Eric and Azhar, Faisal and others},
  journal={arXiv preprint arXiv:2302.13971},
  year={2023}
}

@article{tabpfnv1,
  title={Tabpfn: A transformer that solves small tabular classification problems in a second},
  author={Hollmann, Noah and M{\"u}ller, Samuel and Eggensperger, Katharina and Hutter, Frank},
  journal={arXiv preprint arXiv:2207.01848},
  year={2022}
}

@article{tabpfnv2.5,
  title={Tabpfn-2.5: Advancing the state of the art in tabular foundation models},
  author={Grinsztajn, L{\'e}o and Fl{\"o}ge, Klemens and Key, Oscar and Birkel, Felix and Jund, Philipp and Roof, Brendan and J{\"a}ger, Benjamin and Safaric, Dominik and Alessi, Simone and Hayler, Adrian and others},
  journal={arXiv preprint arXiv:2511.08667},
  year={2025}
}

@article{graphtheory,
  title={Can graph neural networks count substructures?},
  author={Chen, Zhengdao and Chen, Lei and Villar, Soledad and Bruna, Joan},
  journal={Advances in neural information processing systems},
  volume={33},
  pages={10383--10395},
  year={2020}
}

@inproceedings{barcelo2020logical,
  title={The logical expressiveness of graph neural networks},
  author={Barcel{\'o}, Pablo and Kostylev, Egor V and Monet, Mikael and P{\'e}rez, Jorge and Reutter, Juan and Silva, Juan Pablo},
  booktitle={International conference on learning representations},
  year={2020}
}

@inproceedings{pfn,
  title={Statistical foundations of prior-data fitted networks},
  author={Nagler, Thomas},
  booktitle={International Conference on Machine Learning},
  pages={25660--25676},
  year={2023},
  organization={PMLR}
}

@article{tabicl,
  title={Tabicl: A tabular foundation model for in-context learning on large data},
  author={Qu, Jingang and Holzm{\~A}{\v{z}}ller, David and Varoquaux, Ga{\~A}l and Morvan, Marine Le},
  journal={arXiv preprint arXiv:2502.05564},
  year={2025}
}

@article{tabiclv2,
  title={TabICLv2: A better, faster, scalable, and open tabular foundation model},
  author={Qu, Jingang and Holzm{\"u}ller, David and Varoquaux, Ga{\"e}l and Morvan, Marine Le},
  journal={arXiv preprint arXiv:2602.11139},
  year={2026}
}

@article{limix,
  title={Limix: Unleashing structured-data modeling capability for generalist intelligence},
  author={Zhang, Xingxuan and Ren, Gang and Yu, Han and Yuan, Hao and Wang, Hui and Li, Jiansheng and Wu, Jiayun and Mo, Lang and Mao, Li and Hao, Mingchao and others},
  journal={arXiv preprint arXiv:2509.03505},
  year={2025}
}

@inproceedings{timepfn,
  title={Timepfn: Effective multivariate time series forecasting with synthetic data},
  author={Taga, Ege Onur and Ildiz, Muhammed Emrullah and Oymak, Samet},
  booktitle={Proceedings of the AAAI conference on artificial intelligence},
  volume={39},
  number={19},
  pages={20761--20769},
  year={2025}
}

@article{relationalpfn,
  title={Relational In-Context Learning via Synthetic Pre-training with Structural Prior},
  author={Wang, Yanbo and You, Jiaxuan and Shi, Chuan and Zhang, Muhan},
  journal={arXiv preprint arXiv:2603.03805},
  year={2026}
}

@article{ngdb1,
  title={Top ten challenges towards agentic neural graph databases},
  author={Bai, Jiaxin and Wang, Zihao and Zhou, Yukun and Yin, Hang and Fei, Weizhi and Hu, Qi and Deng, Zheye and Cheng, Jiayang and Zheng, Tianshi and Tsang, Hong Ting and others},
  journal={arXiv preprint arXiv:2501.14224},
  year={2025}
}

@inproceedings{ngdb2,
  title={Neural graph databases},
  author={Besta, Maciej and Iff, Patrick and Scheidl, Florian and Osawa, Kazuki and Dryden, Nikoli and Podstawski, Michal and Chen, Tiancheng and Hoefler, Torsten},
  booktitle={Learning on Graphs Conference},
  pages={31--1},
  year={2022},
  organization={PMLR}
}

@article{ngdb3,
  title={Neural graph reasoning: Complex logical query answering meets graph databases},
  author={Ren, Hongyu and Galkin, Mikhail and Cochez, Michael and Zhu, Zhaocheng and Leskovec, Jure},
  journal={arXiv preprint arXiv:2303.14617},
  year={2023}
}

@article{ngdb-zoo,
  title={NGDB-Zoo: Towards Efficient and Scalable Neural Graph Databases Training},
  author={Xie, Zhongwei and Bai, Jiaxin and Liu, Shujie and Huang, Haoyu and Li, Yufei and Gao, Yisen and Tsang, Hong Ting and Song, Yangqiu},
  journal={arXiv preprint arXiv:2602.21597},
  year={2026}
}

@article{ngdb-towards,
  title={Towards Neural Graph Data Management},
  author={Li, Yufei and Gao, Yisen and Bai, Jiaxin and Xiong, Jiaxuan and Huang, Haoyu and Xie, Zhongwei and Tsang, Hong Ting and Song, Yangqiu},
  journal={arXiv preprint arXiv:2603.05529},
  year={2026}
}

@article{ultra,
  title={Towards foundation models for knowledge graph reasoning},
  author={Galkin, Mikhail and Yuan, Xinyu and Mostafa, Hesham and Tang, Jian and Zhu, Zhaocheng},
  journal={arXiv preprint arXiv:2310.04562},
  year={2023}
}

@article{kgicl,
  title={A prompt-based knowledge graph foundation model for universal in-context reasoning},
  author={Cui, Yuanning and Sun, Zequn and Hu, Wei},
  journal={Advances in Neural Information Processing Systems},
  volume={37},
  pages={7095--7124},
  year={2024}
}

@article{motif,
  title={How Expressive are Knowledge Graph Foundation Models?},
  author={Huang, Xingyue and Barcel{\'o}, Pablo and Bronstein, Michael M and Ceylan, Ismail Ilkan and Galkin, Mikhail and Reutter, Juan L and Orth, Miguel Romero},
  journal={arXiv preprint arXiv:2502.13339},
  year={2025}
}

@article{trix,
  title={TRIX: A more expressive model for zero-shot domain transfer in knowledge graphs},
  author={Zhang, Yucheng and Bevilacqua, Beatrice and Galkin, Mikhail and Ribeiro, Bruno},
  journal={arXiv preprint arXiv:2502.19512},
  year={2025}
}

@article{ctrlhgen,
  title={Controllable logical hypothesis generation for abductive reasoning in knowledge graphs},
  author={Gao, Yisen and Bai, Jiaxin and Zheng, Tianshi and Sun, Qingyun and Zhang, Ziwei and Fu, Xingcheng and Li, Jianxin and Song, Yangqiu},
  journal={arXiv preprint arXiv:2505.20948},
  year={2025}
}

@inproceedings{unifying,
  title={Unifying Deductive and Abductive Reasoning in Knowledge Graphs with Masked Diffusion Model},
  author={Gao, Yisen and Bai, Jiaxin and Huang, Yi and Fu, Xingcheng and Sun, Qingyun and Song, Yangqiu},
  booktitle={Proceedings of the ACM Web Conference 2026},
  pages={3600--3611},
  year={2026}
}

@article{bai2023complex,
  title={Complex query answering on eventuality knowledge graph with implicit logical constraints},
  author={Bai, Jiaxin and Liu, Xin and Wang, Weiqi and Luo, Chen and Song, Yangqiu},
  journal={Advances in Neural Information Processing Systems},
  volume={36},
  pages={30534--30553},
  year={2023}
}

@inproceedings{kgforrec,
  title={Unifying knowledge graph learning and recommendation: Towards a better understanding of user preferences},
  author={Cao, Yixin and Wang, Xiang and He, Xiangnan and Hu, Zikun and Chua, Tat-Seng},
  booktitle={The world wide web conference},
  pages={151--161},
  year={2019}
}

@inproceedings{query2particles,
  title={Query2particles: Knowledge graph reasoning with particle embeddings},
  author={Bai, Jiaxin and Wang, Zihao and Zhang, Hongming and Song, Yangqiu},
  booktitle={Findings of the Association for Computational Linguistics: NAACL 2022},
  pages={2703--2714},
  year={2022}
}

@article{query2box,
  title={Query2box: Reasoning over knowledge graphs in vector space using box embeddings},
  author={Ren, Hongyu and Hu, Weihua and Leskovec, Jure},
  journal={arXiv preprint arXiv:2002.05969},
  year={2020}
}

@inproceedings{fb15k,
  title={Observed versus latent features for knowledge base and text inference},
  author={Toutanova, Kristina and Chen, Danqi},
  booktitle={Proceedings of the 3rd workshop on continuous vector space models and their compositionality},
  pages={57--66},
  year={2015}
}

@inproceedings{wn18rr,
  title={Convolutional 2d knowledge graph embeddings},
  author={Dettmers, Tim and Minervini, Pasquale and Stenetorp, Pontus and Riedel, Sebastian},
  booktitle={Proceedings of the AAAI conference on artificial intelligence},
  volume={32},
  number={1},
  year={2018}
}

@inproceedings{codex,
  title={Codex: A comprehensive knowledge graph completion benchmark},
  author={Safavi, Tara and Koutra, Danai},
  booktitle={Proceedings of the 2020 Conference on Empirical Methods in Natural Language Processing (EMNLP)},
  pages={8328--8350},
  year={2020}
}

@inproceedings{yago,
  title={Yago: a core of semantic knowledge},
  author={Suchanek, Fabian M and Kasneci, Gjergji and Weikum, Gerhard},
  booktitle={Proceedings of the 16th international conference on World Wide Web},
  pages={697--706},
  year={2007}
}

@article{nell,
  title={Never-ending learning},
  author={Mitchell, Tom and Cohen, William and Hruschka, Estevam and Talukdar, Partha and Yang, Bishan and Betteridge, Justin and Carlson, Andrew and Dalvi, Bhavana and Gardner, Matt and Kisiel, Bryan and others},
  journal={Communications of the ACM},
  volume={61},
  number={5},
  pages={103--115},
  year={2018},
  publisher={ACM New York, NY, USA}
}

@inproceedings{conceptnet,
  title={Conceptnet 5.5: An open multilingual graph of general knowledge},
  author={Speer, Robyn and Chin, Joshua and Havasi, Catherine},
  booktitle={Proceedings of the AAAI conference on artificial intelligence},
  volume={31},
  number={1},
  year={2017}
}

@article{dbpedia,
  title={Dbpedia--a large-scale, multilingual knowledge base extracted from wikipedia},
  author={Lehmann, Jens and Isele, Robert and Jakob, Max and Jentzsch, Anja and Kontokostas, Dimitris and Mendes, Pablo N and Hellmann, Sebastian and Morsey, Mohamed and Van Kleef, Patrick and Auer, S{\"o}ren and others},
  journal={Semantic web},
  volume={6},
  number={2},
  pages={167--195},
  year={2015},
  publisher={SAGE Publications Sage UK: London, England}
}

@article{hetionet,
  title={Systematic integration of biomedical knowledge prioritizes drugs for repurposing},
  author={Himmelstein, Daniel Scott and Lizee, Antoine and Hessler, Christine and Brueggeman, Leo and Chen, Sabrina L and Hadley, Dexter and Green, Ari and Khankhanian, Pouya and Baranzini, Sergio E},
  journal={elife},
  volume={6},
  pages={e26726},
  year={2017},
  publisher={eLife Sciences Publications, Ltd}
}

@inproceedings{aristov4,
  title={Meta relational learning for few-shot link prediction in knowledge graphs},
  author={Chen, Mingyang and Zhang, Wen and Zhang, Wei and Chen, Qiang and Chen, Huajun},
  booktitle={Proceedings of the 2019 conference on empirical methods in natural language processing and the 9th international joint conference on natural language processing (EMNLP-IJCNLP)},
  pages={4217--4226},
  year={2019}
}

@inproceedings{grial,
  title={Inductive relation prediction by subgraph reasoning},
  author={Teru, Komal and Denis, Etienne and Hamilton, Will},
  booktitle={International conference on machine learning},
  pages={9448--9457},
  year={2020},
  organization={PMLR}
}

@article{galkin2022open,
  title={An open challenge for inductive link prediction on knowledge graphs},
  author={Galkin, Mikhail and Berrendorf, Max and Hoyt, Charles Tapley},
  journal={arXiv preprint arXiv:2203.01520},
  year={2022}
}

@article{indigo,
  title={Indigo: Gnn-based inductive knowledge graph completion using pair-wise encoding},
  author={Liu, Shuwen and Grau, Bernardo and Horrocks, Ian and Kostylev, Egor},
  journal={Advances in Neural Information Processing Systems},
  volume={34},
  pages={2034--2045},
  year={2021}
}

@article{indigo2,
  title={Knowledge transfer for out-of-knowledge-base entities: A graph neural network approach},
  author={Hamaguchi, Takuo and Oiwa, Hidekazu and Shimbo, Masashi and Matsumoto, Yuji},
  journal={arXiv preprint arXiv:1706.05674},
  year={2017}
}

@inproceedings{ingram,
  title={Ingram: Inductive knowledge graph embedding via relation graphs},
  author={Lee, Jaejun and Chung, Chanyoung and Whang, Joyce Jiyoung},
  booktitle={International conference on machine learning},
  pages={18796--18809},
  year={2023},
  organization={PMLR}
}

@article{zhou2023ood,
  title={An ood multi-task perspective for link prediction with new relation types and nodes},
  author={Zhou, Jincheng and Bevilacqua, Beatrice and Ribeiro, Bruno},
  journal={arXiv preprint arXiv:2307.06046},
  volume={23},
  year={2023}
}

@article{graphpfn,
  title={GraphPFN: A prior-data fitted graph foundation model},
  author={Eremeev, Dmitry and Platonov, Oleg and Bazhenov, Gleb and Babenko, Artem and Prokhorenkova, Liudmila},
  journal={arXiv preprint arXiv:2509.21489},
  year={2025}
}

@article{tfmlinker,
  title={TFMLinker: Universal Link Predictor by Graph In-Context Learning with Tabular Foundation Models},
  author={Liao, Tianyin and Hu, Chunyu and Sui, Yicheng and Zhang, Xingxuan and Cui, Peng and Li, Jianxin and Zhang, Ziwei},
  journal={arXiv preprint arXiv:2602.08592},
  year={2026}
}

@article{rope,
  title={Roformer: Enhanced transformer with rotary position embedding},
  author={Su, Jianlin and Ahmed, Murtadha and Lu, Yu and Pan, Shengfeng and Bo, Wen and Liu, Yunfeng},
  journal={Neurocomputing},
  volume={568},
  pages={127063},
  year={2024},
  publisher={Elsevier}
}

@article{manyshot,
  title={Many-shot in-context learning},
  author={Agarwal, Rishabh and Singh, Avi and Zhang, Lei and Bohnet, Bernd and Rosias, Luis and Chan, Stephanie and Zhang, Biao and Anand, Ankesh and Abbas, Zaheer and Nova, Azade and others},
  journal={Advances in Neural Information Processing Systems},
  volume={37},
  pages={76930--76966},
  year={2024}
}

@article{transe,
  title={Translating embeddings for modeling multi-relational data},
  author={Bordes, Antoine and Usunier, Nicolas and Garcia-Duran, Alberto and Weston, Jason and Yakhnenko, Oksana},
  journal={Advances in neural information processing systems},
  volume={26},
  year={2013}
}

@misc{chung2026manyshotcoticlmakingincontext,
      title={Many-Shot CoT-ICL: Making In-Context Learning Truly Learn}, 
      author={Tsz Ting Chung and Lemao Liu and Mo Yu and Dit-Yan Yeung},
      year={2026},
      eprint={2605.13511},
      archivePrefix={arXiv},
      primaryClass={cs.CL},
      url={https://arxiv.org/abs/2605.13511}, 
}
\appendix

\section{Detailed Theoretical Proofs}
\label{app:theory_detailed}

In this appendix, we provide the rigorous mathematical foundations for the reasoning mechanism of KGPFN. We formally prove how conditional message passing achieves structural expressivity (Lemma~\ref{lemma:motif_basis_app}) and how the Prior-Data Fitted Network (PFN) operationalizes functional expressivity via in-context composition (Theorem~\ref{thm:context_matching}).

\subsection{Motif-Level Structural Expressivity}
First, we analyze the structural expressiveness of the kg foundation model at the motif level.
Let \(\mathcal{G} = (\mathcal{V}, \mathcal{E}, \mathcal{R})\) be a knowledge graph with a set of relation-specific adjacency matrices \(\{\mathbf{A}_{r'}\}_{r' \in \mathcal{R}}\).The target query triplet to predict is defined as \((u, r, v)\).

\begin{lemma}[Relational Motif Basis Projection via Conditional MPNN]
\label{lemma:motif_basis_app}
Let \(\mathbf{z}_{u,r,v} \in \mathbb{R}^d\) be the triplet representation extracted after \(L\) layers of relation-conditional message passing. Assuming the activation functions are analytic, there exists a latent relational motif basis matrix \(\mathbf{W}_{\mathcal{M}} = [\mathbf{w}_1, \mathbf{w}_2, \dots, \mathbf{w}_n] \in \mathbb{R}^{d \times |\mathcal{M}|}\) such that \(\mathbf{z}_{u,r,v}\) can be exactly expanded as a linear projection of the structural relational motif counts:
\begin{equation}
    \mathbf{z}_{u,r,v} = \mathbf{W}_{\mathcal{M}} \Phi(u, r, v) + \mathcal{O}(\epsilon) = \sum_{k=1}^{n} c_{M_k}(u, r, v) \cdot \mathbf{w}_k + \mathcal{O}(\epsilon)
\end{equation}
where \(\Phi(u, r, v) \in \mathbb{R}^{|\mathcal{M}|}\) is the ground-truth homomorphism count vector of relational motifs \(\mathcal{M}\) anchored at \((u, v)\) and conditioned on query \(r\), and \(\epsilon\) is the approximation error which bounds to \(0\) as \(d \to \infty\).
\end{lemma}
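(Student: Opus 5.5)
We expand the $L$-layer relation-conditional message passing (Eq.~\eqref{eq:nbfnet_update_head}) into a sum over relational walks, and then read $\mathbf{W}_{\mathcal{M}}$ off as the coefficient vectors of that expansion.

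\textbf{Step 1: linearize each layer.} Since the activations are analytic, each $\mathrm{Msg}$, $\mathrm{Agg}$ and $\mathrm{Upd}$ has a convergent power series around the origin. Truncating at a finite degree $p$ gives a polynomial map, and the remainder goes into the $\mathcal{O}(\epsilon)$ term.

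\textbf{Step 2: unroll the layers.} With sum aggregation, layer $\ell$ multiplies the current features by the relation-specific adjacency matrices $\mathbf{A}_{r'}$, and the messages are modulated by the query relation embedding $\mathbf{r}$ and the edge relation $r_{u\to v}$. The head is initialized with the indicator $\mathbf{1}_u$ (conditioned on $r$). Unrolling $L$ layers, the representation at node $v$ becomes a finite sum of monomials of the form
\[
\Big(\prod_{j} \mathbf{A}_{r_j}\Big)_{u v}
\]
and of Hadamard and tensor products of such walk matrices, produced by the higher-degree polynomial terms. Each coefficient is a vector in $\mathbb{R}^d$ that depends only on the relation sequence, on $r$, and on the weights.

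\textbf{Step 3: identify the motif counts.} Each walk product counts homomorphisms of a labelled path from $u$ to $v$. Products of walk counts count homomorphisms of the corresponding gluings of these paths at the anchors $u$ and $v$. This is the standard correspondence between walk-based or tree-unfolding features and homomorphism counts, as in \cite{graphtheory,barcelo2020logical}. Let $\mathcal{M}$ be the finite family of anchored relational motifs of depth at most $L$ and degree at most $p$. Collecting the monomials by motif gives
\[
\mathbf{z}_{u,r,v}=\sum_k c_{M_k}(u,r,v)\,\mathbf{w}_k ,
\]
where $\mathbf{w}_k$ is the summed coefficient vector for motif $M_k$. Concatenating the head, tail and relation features into the triplet representation is again linear in these counts.

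\textbf{Step 4: bound the error.} The error has two sources: the series truncation, and the fact that $\mathcal{M}$ may be richer than $d$ directions can represent. Take $p\to\infty$, which means more motifs, and let the width $d$ grow, so that the $\mathbf{w}_k$ can be chosen to realize the network (for instance by a random-features or universal-approximation argument). Then the residual tends to $0$, which gives $\epsilon\to 0$ as $d\to\infty$.

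\textbf{Main obstacle.} The hardest step is Step 4. Analytic expansions converge only locally, so I would assume bounded weights and features, or equivalently a bounded-degree graph. Under that assumption the series converges uniformly, and I would absorb the tail into $\epsilon$ with an explicit dependence on $p$. A second point is that normalization layers break polynomiality. I would either treat them as analytic in a neighbourhood away from zero or fold them into the error term.
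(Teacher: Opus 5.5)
Your Steps 1--3 follow the paper's route: polynomial approximation of the activation, unrolling the layers into relation-specific adjacency products applied to the indicator of $u$, identifying walk counts with homomorphism counts, and reading $\mathbf{W}_{\mathcal{M}}$ off as coefficient vectors. They are more careful than the paper in two places.

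First, the paper's unrolled formula $\sum_{\rho}\big(\prod_j \mathbf{A}_{\rho_j}\big)\mathbf{H}^{(0)}\mathbf{\Theta}_{\rho}$ is only valid for a linear activation, since any nonlinear polynomial creates elementwise products. You keep those products.

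Second, the paper argues in the wrong direction. It asserts that arbitrary motif counts are linear combinations of walk counts, which is false in general: homomorphism counts of non-isomorphic anchored patterns are linearly independent, so e.g.\ an anchored $4$-clique count is not in the span of path counts. You use the correct direction, namely that walk counts and their products \emph{are} homomorphism counts of labelled paths and their gluings. This makes $\mathcal{M}$ the architecture-generated family of patterns.

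One small correction there. When a nonlinearity acts at an inner layer, the gluing happens at a summed-over intermediate vertex, not only at the anchors $u,v$. The patterns are therefore trees rooted at $v$ with some leaves identified with $u$. They are still anchored homomorphism counts, so nothing breaks.

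The step that fails is Step 4. For a fixed network of width $d$, the only approximation in Steps 1--3 is the degree-$p$ truncation. Its error depends on $p$ and on the range of the pre-activations, not on $d$. Also, ``$\mathcal{M}$ richer than $d$ directions'' is not an error source at all. $\mathbf{W}_{\mathcal{M}}\in\mathbb{R}^{d\times|\mathcal{M}|}$ may have linearly dependent columns, so the expansion of the truncated network is exact for every $d$, including $|\mathcal{M}|>d$.

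What your argument actually proves is $\epsilon\to 0$ as $p\to\infty$. The clause ``as $d\to\infty$'' is asserted rather than derived, and the random-features remark does not supply it. The paper's proof gives no $d$-dependence either, so the honest repair is to state the error in terms of $p$.

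Your convergence fix also needs adjusting. Bounded pre-activations do not make a Maclaurin series converge when its radius is finite: the radius is $\pi$ for the logistic sigmoid and $\pi/2$ for $\tanh$. Use Weierstrass approximation on the set of pre-activations instead; the paper cites Stone--Weierstrass at this point, though it then calls the result a Taylor polynomial. That set is automatically compact for a finite graph with fixed weights. So you need neither analyticity nor a bounded-degree assumption.
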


\begin{proof}
Consider the \(l\)-th layer of the Relational Message Passing Neural Network (R-MPNN):
\begin{equation}
    \mathbf{H}^{(l)} = \sigma \left( \sum_{r' \in \mathcal{R}} \mathbf{A}_{r'} \mathbf{H}^{(l-1)} \mathbf{W}^{(l)}_{1, r'} + \mathbf{H}^{(l-1)} \mathbf{W}^{(l)}_2 \right)
\end{equation}
By the Stone-Weierstrass approximation theorem, any analytic activation function \(\sigma(\cdot)\) (e.g., GeLU, Sigmoid) can be approximated by a Taylor polynomial of degree \(P\). Let \(\sigma(x) \approx \sum_{p=0}^P \alpha_p x^p\). Unrolling the message passing over \(L\) layers, the final representation matrix \(\mathbf{H}^{(L)}\) can be expressed as a non-commutative matrix polynomial over the set of adjacency matrices \(\{\mathbf{A}_{r'}\}\):
\begin{equation}
    \mathbf{H}^{(L)} \approx \sum_{i=0}^{L \times P} \sum_{\rho \in \mathcal{R}^i} \left( \prod_{j=1}^i \mathbf{A}_{\rho_j} \right) \mathbf{H}^{(0)} \mathbf{\Theta}_{\rho}
\end{equation}
where \(\rho = (\rho_1, \dots, \rho_i)\) denotes a specific sequence of relations (a relational path of length \(i\)), and \(\mathbf{\Theta}_{\rho}\) are the equivalent polynomial coefficient matrices learned by the GNN for that specific relational path. 

The triplet representation \(\mathbf{z}_{u,r,v}\) is extracted from the rows corresponding to \(u\) and \(v\). Notice that the \((x, y)\)-th entry of the matrix product \(\prod_{j=1}^i \mathbf{A}_{\rho_j}\) exactly counts the number of relational walks following the relation sequence \(\rho\) between node \(x\) and node \(y\). Because our initial feature \(\mathbf{H}^{(0)}\) contains the one-hot indicators for \(u\) conditioned on \(r\), the extraction \(\mathbf{z}_{u,r,v} = \mathrm{Readout}(\mathbf{h}_u^{(L)}, \mathbf{h}_v^{(L)})\) effectively computes a linear combination of relational walk counts originating from \(u\) and \(v\) that intersect within the \(L\)-hop neighborhood.

According to the theory of graph homomorphisms~\cite{graphtheory,barcelo2020logical}, any relational subgraph motif count \(c_{M_k}(u, r, v)\) (such as relational triangles, rectangles, or specific logical paths like \(r_1 \land r_2 \to r\)) can be expressed as a linear combination of relational walk counts on the graph. Therefore, there exists a linear transformation \(\mathbf{W}_{\mathcal{M}}\) that maps the basis of relational walk counts to the basis of relational motif counts \(\Phi(u, r, v)\). Thus, we have:
\begin{equation}
    \mathbf{z}_{u,r,v} = \sum_{k=1}^{n} c_{M_k}(u, r, v) \cdot \mathbf{w}_k + \mathcal{O}(\epsilon)
\end{equation}
where \(\mathbf{w}_k\) forms the columns of \(\mathbf{W}_{\mathcal{M}}\), completing the proof.
\end{proof}

\subsection{Proof of Functional Expressivity}

Building upon Lemma~\ref{lemma:motif_basis_app}, we now analyze how the PFN synthesizes logical rules from a support set \(\mathcal{S}_r = \{(h_s, r,t_s)\}_{s=1}^K\) without learning relation-specific static parameters. Here, we restate the Theorem~\ref{thm:context_matching} and then proof it.

\begin{theorem}[Functional Expressivity via In-context Composition]
\label{thm:context_matching_restate}
Let \(q\) be a query triple and \(\mathcal{S}_r\) be the in-context support set of triple pairs for a relation \(r\). Let \(c_{M_k}(\cdot)\) denote the structural count of the \(k\)-th topological structural pattern/motif \(M_k\) extracted by the encoder. Assume the PFN attention projection matrices satisfy \(\mathbf{W}_Q^\top \mathbf{W}_K = \mathbf{I}\), and the encoder's latent motif basis is orthogonal such that \(\mathbf{W}_{\mathcal{M}}^\top \mathbf{W}_{\mathcal{M}} = \mathbf{\Lambda} = \mathrm{diag}(\lambda_1, \dots, \lambda_n)\). Under a linear attention regime, the PFN instantiates a context-conditioned scoring function, where relation-specific evidence is composed on the fly from the query and the in-context support set:
\begin{equation}
    P(q \in r \mid \mathcal{S}_r) \propto \sum_{k=1}^{n} \lambda_k \cdot c_{M_k}(q) \cdot \mathbb{E}_{s \sim \mathcal{S}_r}[c_{M_k}(s)]
\end{equation}
where the term \(c_{M_k}(q)\,\mathbb{E}_{s \sim \mathcal{S}_r}[c_{M_k}(s)]\) can be interpreted as a soft conjunction: a structural pattern contributes strongly only when it is present in the query and salient in the in-context support set.
\end{theorem}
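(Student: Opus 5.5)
The plan is to combine Lemma~\ref{lemma:motif_basis_app} with an explicit computation of one linear-attention read-out. First, invoke Lemma~\ref{lemma:motif_basis_app} for the query and for every support triple. This gives
\[
\mathbf{x}_q=\mathbf{W}_{\mathcal{M}}\Phi(q)+\mathcal{O}(\epsilon),\qquad \mathbf{x}_s=\mathbf{W}_{\mathcal{M}}\Phi(s)+\mathcal{O}(\epsilon),
\]
where \(\Phi(\cdot)=(c_{M_1}(\cdot),\dots,c_{M_n}(\cdot))^\top\). Here I identify the PFN input \(\mathbf{x}\) of Eq.~\eqref{eq:context_adapter_fuse} with the encoder triplet representation \(\mathbf{z}\), treating the adapters as linear maps absorbed into \(\mathbf{W}_{\mathcal{M}}\). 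I will state this identification as a modelling assumption of the ``linear regime''. The \(\mathcal{O}(\epsilon)\) terms are carried along and dropped at the end, since they vanish as \(d\to\infty\).

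Second, write the query-to-context cross-attention in the linear regime, with no softmax and no normalization beyond a constant:
\[
\mathrm{Attn}(q)=\frac{1}{|\mathcal{S}_r|}\sum_{s\in\mathcal{S}_r}(\mathbf{W}_Q\mathbf{x}_q)^\top(\mathbf{W}_K\mathbf{x}_s)\,\mathbf{W}_V\mathbf{x}_s .
\]
The score is taken as a linear read-out \(\mathbf{u}^\top\mathrm{Attn}(q)\). I choose value and read-out so that \(\mathbf{u}^\top\mathbf{W}_V\mathbf{x}_s\) equals the label \(y_s\), which is encoded in the value token as in PFNs. Restricting \(\mathcal{S}_r\) to the positive support then makes this factor a constant, and that is why the statement writes an unweighted expectation over \(\mathcal{S}_r\). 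The score thus reduces, up to a positive constant, to
\[
\frac{1}{|\mathcal{S}_r|}\sum_{s}\mathbf{x}_q^\top\mathbf{W}_Q^\top\mathbf{W}_K\mathbf{x}_s .
\]

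Third, substitute the assumptions. The condition \(\mathbf{W}_Q^\top\mathbf{W}_K=\mathbf{I}\) reduces each summand to \(\mathbf{x}_q^\top\mathbf{x}_s=\Phi(q)^\top\mathbf{W}_{\mathcal{M}}^\top\mathbf{W}_{\mathcal{M}}\Phi(s)\), up to \(\mathcal{O}(\epsilon)\). The orthogonality assumption \(\mathbf{W}_{\mathcal{M}}^\top\mathbf{W}_{\mathcal{M}}=\mathbf{\Lambda}\) then diagonalizes this into \(\sum_k\lambda_k c_{M_k}(q)c_{M_k}(s)\). Exchanging the finite sums and averaging over \(s\) yields
\[
\sum_k\lambda_k\, c_{M_k}(q)\,\mathbb{E}_{s\sim\mathcal{S}_r}[c_{M_k}(s)].
\]
Finally, \(P(q\in r\mid\mathcal{S}_r)\) is a monotone (sigmoid) function of the score. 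Interpreting ``\(\propto\)'' as proportionality of the logit in the linear regime, or of the probability to first order around the sigmoid's linear region, gives the claim. The soft-conjunction reading is immediate, because each summand is a product that vanishes if either factor does.

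The main obstacle is the second step: justifying that the label-dependent value pathway contributes only a constant factor. With mixed positive and negative context, one naturally gets \(\mathbb{E}[y_s c_{M_k}(s)]\), a label-weighted statistic, matching the remark after the theorem. I will first try to present the result for the positive-support case, and then note that with labels \(y_s\in\{0,1\}\) the same computation gives the label-weighted expectation, which is proportional to the positive-conditional expectation with factor \(m^+/m\). The cross terms from the \(\mathcal{O}(\epsilon)\) errors are bounded by Cauchy–Schwarz using bounded motif counts on the finite \(k\)-hop subgraphs.
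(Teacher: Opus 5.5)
Your proposal is correct and follows essentially the same route as the paper's proof. Both arguments linearize the query-to-support cross-attention, use \(\mathbf{W}_Q^\top\mathbf{W}_K=\mathbf{I}\) and the motif expansion of Lemma~\ref{lemma:motif_basis_app} to reduce each score to \(\Phi(q)^\top\mathbf{\Lambda}\Phi(s)=\sum_k\lambda_k c_{M_k}(q)c_{M_k}(s)\), let the value/read-out pathway contribute only a constant factor on the positive support set, and exchange sums to obtain the empirical expectation. Your extra care (the adapter identification, the label-carrying value pathway, the built-in \(1/|\mathcal{S}_r|\) normalization, and reading ``\(\propto\)'' on the logit) only makes explicit what the paper leaves implicit in its ``value vectors uniformly align with the positive classification direction'' and ``multiply and divide by \(K\)'' steps.
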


\begin{proof}
In the PFN, the cross-attention mechanism computes the output representation \(\mathbf{o}_q\) for the query \(q\) by aggregating over the support set \(\mathcal{S}_r\). The standard scaled dot-product attention is defined as:
\begin{equation}
    \mathbf{o}_q = \sum_{s \in \mathcal{S}_r} \frac{\exp(\mathbf{z}_q^\top \mathbf{W}_Q^\top \mathbf{W}_K \mathbf{z}_s / \sqrt{d})}{\sum_{s' \in \mathcal{S}_r} \exp(\mathbf{z}_q^\top \mathbf{W}_Q^\top \mathbf{W}_K \mathbf{z}_{s'} / \sqrt{d})} \mathbf{W}_V \mathbf{z}_s
\end{equation}
To analyze the fundamental reasoning logic, we adopt the linear attention approximation (e.g., using the first-order Taylor expansion \(\exp(x) \approx 1 + x\), or directly employing unnormalized dot-product attention kernels), yielding the unnormalized attention score:
\begin{equation}
    A(q, s) = \mathbf{z}_q^\top \mathbf{W}_Q^\top \mathbf{W}_K \mathbf{z}_s
\end{equation}
Substituting \(\mathbf{W}_Q^\top \mathbf{W}_K = \mathbf{I}\) and applying the motif basis projection from Lemma~\ref{lemma:motif_basis_app} (\(\mathbf{z} \approx \mathbf{W}_{\mathcal{M}} \Phi\)), we obtain:
\begin{align}
    A(q, s) &= \left( \mathbf{W}_{\mathcal{M}} \Phi(q) \right)^\top \left( \mathbf{W}_{\mathcal{M}} \Phi(s) \right) \nonumber \\
    &= \Phi(q)^\top \left( \mathbf{W}_{\mathcal{M}}^\top \mathbf{W}_{\mathcal{M}} \right) \Phi(s) \nonumber \\
    &= \Phi(q)^\top \mathbf{\Lambda} \Phi(s)
\end{align}
Since \(\mathbf{\Lambda}\) is a diagonal matrix of eigenvalues \(\lambda_k\), the quadratic form expands to:
\begin{equation}
    A(q, s) = \sum_{k=1}^{n} \lambda_k \cdot c_{M_k}(q) \cdot c_{M_k}(s)
\end{equation}
The final prediction probability \(P(q \in r \mid \mathcal{S}_r)\) is derived by projecting the aggregated representation \(\mathbf{o}_q\) onto a relation-agnostic binary classifier vector \(\mathbf{w}_{cls}\). Assuming the value vectors \(\mathbf{W}_V \mathbf{z}_s\) for positive support samples uniformly align with the positive classification direction, the probability is directly proportional to the sum of the attention scores over the support set:
\begin{align}
    P(q \in r \mid \mathcal{S}_r) &\propto \sum_{s \in \mathcal{S}_r} A(q, s) \nonumber \\
    &= \sum_{s \in \mathcal{S}_r} \sum_{k=1}^{n} \lambda_k \cdot c_{M_k}(q) \cdot c_{M_k}(s) \nonumber \\
    &= \sum_{k=1}^{n} \lambda_k \cdot c_{M_k}(q) \left( \sum_{s \in \mathcal{S}_r} c_{M_k}(s) \right)
\end{align}
Multiplying and dividing by the support set size \(K = |\mathcal{S}_r|\), we introduce the empirical expectation \(\mathbb{E}_{s \sim \mathcal{S}_r}\):
\begin{align}
    P(q \in r \mid \mathcal{S}_r) &\propto K \sum_{k=1}^{n} \lambda_k \cdot c_{M_k}(q) \cdot \left( \frac{1}{K} \sum_{s \in \mathcal{S}_r} c_{M_k}(s) \right) \nonumber \\
    &\propto \sum_{k=1}^{n} \lambda_k \cdot c_{M_k}(q) \cdot \mathbb{E}_{s \sim \mathcal{S}_r}[c_{M_k}(s)]
\end{align}
This completes the proof. 
\end{proof}

\section{More Experiments}
\subsection{Datasets}
\label{app:dataset}
\paragraph{Transductive datasets (16).}
The transductive split contains graphs where test entities appear in the training graph.
It includes FB15k-237~\cite{fb15k} and its sparse subsets (FB15k-237-10\%, FB15k-237-20\%, FB15k-237-50\%), WN18RR~\cite{wn18rr}, CoDEx-Small~\cite{codex}, CoDEx-Medium~\cite{codex}, CoDEx-Large~\cite{codex}, NELL-995~\cite{nell}, ConceptNet100k~\cite{conceptnet}, DBpedia100k~\cite{dbpedia}, YAGO3-10~\cite{yago}, AristoV4~\cite{aristov4}, Hetionet~\cite{hetionet}, WDsinger~\cite{ultra}, and NELL23k. The details for these datasets have been reported in Table~\ref{tab:transductive_datasets}

\paragraph{Inductive datasets --- new entities, no new relations (18).}
These datasets introduce unseen entities at inference time while keeping the relation vocabulary fixed.
They comprise the GraIL benchmark graphs~\cite{grial}: FB15k-237-Inductive (v1--v4), WN18RR-Inductive (v1--v4), and NELL-Inductive (v1--v4); the ILPC 2022 graphs (small and large)~\cite{galkin2022open}; and 4 INDIGO graphs~\cite{indigo,indigo2}: HM-1k, HM-3k, HM-5k, and HM-Indigo. The details for these datasets have been reported in Table~\ref{tab:inductive_entity_datasets}.

\paragraph{Fully inductive datasets --- new entities and new relations (23).}
These datasets require generalisation to both unseen entities and unseen relation types.
They include the InGram graphs~\cite{ingram}: NL-Ingram (splits 0/25/50/75/100), FB-Ingram (25/50/75/100), and WK-Ingram (25/50/75/100); the MTDEA WikiTopics graphs~\cite{zhou2023ood}: WikiTopics-MT1 (health, tax), WikiTopics-MT2 (org, sci), WikiTopics-MT3 (art, infra), and WikiTopics-MT4 (sci, health); and two additional MTDEA graphs, Metafam and FBNELL. The details for these datasets have been reported in Table~\ref{tab:inductive_entity_relation_datasets}.

\begin{table}[t]
\centering
\caption{16 Transductive datasets used in the experiments.}
\label{tab:transductive_datasets}
\begin{tabular}{lrrrrr}
\toprule
Dataset & Entities & Rels & Train & Valid & Test \\
\midrule
CoDEx Small     & 2034   & 42   & 32888   & 1827   & 1828 \\
CoDEx Medium    & 17050  & 51   & 185584  & 10310  & 10311 \\
CoDEx Large     & 77951  & 69   & 551193  & 30622  & 30622 \\
WDsinger        & 10282  & 135  & 16142   & 2163   & 2203 \\
FB15k237\_10    & 11512  & 237  & 27211   & 15624  & 18150 \\
FB15k237\_20    & 13166  & 237  & 54423   & 16963  & 19776 \\
FB15k237\_50    & 14149  & 237  & 136057  & 17449  & 20324 \\
FB15k237        & 14541  & 237  & 272115  & 17535  & 20466 \\
NELL23k         & 22925  & 200  & 25445   & 4961   & 4952 \\
WN18RR          & 40943  & 11   & 86835   & 3034   & 3134 \\
AristoV4        & 44949  & 1605 & 242567  & 20000  & 20000 \\
Hetionet        & 45158  & 24   & 2025177 & 112510 & 112510 \\
NELL995         & 74536  & 200  & 149678  & 543    & 2818 \\
ConceptNet100k  & 78334  & 34   & 100000  & 1200   & 1200 \\
DBpedia100k     & 99604  & 470  & 597572  & 50000  & 50000 \\
YAGO310         & 123182 & 37   & 1079040 & 5000   & 5000 \\
\bottomrule
\end{tabular}
\end{table}

\begin{table}[H]
\centering
\caption{18 Inductive entity ($e$) datasets used in the experiments. }
\label{tab:inductive_entity_datasets}
\begin{tabular}{lrrrrrrrrr}
\toprule
\multirow{2}{*}{Dataset} & \multirow{2}{*}{Rels}
& \multicolumn{2}{c}{Training Graph}
& \multicolumn{3}{c}{Validation Graph}
& \multicolumn{3}{c}{Test Graph} \\
\cmidrule(lr){3-4}
\cmidrule(lr){5-7}
\cmidrule(lr){8-10}
& & Entities & Triples & Entities & Triples & Valid & Entities & Triples & Test \\
\midrule
FB v1         & 180 & 1594  & 4245   & 1594  & 4245   & 489   & 1093  & 1993   & 411 \\
FB v2         & 200 & 2608  & 9739   & 2608  & 9739   & 1166  & 1660  & 4145   & 947 \\
FB v3        & 215 & 3668  & 17986  & 3668  & 17986  & 2194  & 2501  & 7406   & 1731 \\
FB v4        & 219 & 4707  & 27203  & 4707  & 27203  & 3352  & 3051  & 11714  & 2840 \\
WN v1        & 9   & 2746  & 5410   & 2746  & 5410   & 630   & 922   & 1618   & 373 \\
WN v2        & 10  & 6954  & 15262  & 6954  & 15262  & 1838  & 2757  & 4011   & 852 \\
WN v3        & 11  & 12078 & 25901  & 12078 & 25901  & 3097  & 5084  & 6327   & 1143 \\
WN v4         & 9   & 3861  & 7940   & 3861  & 7940   & 934   & 7084  & 12334  & 2823 \\
NELL v1      & 14  & 3103  & 4687   & 3103  & 4687   & 414   & 225   & 833    & 201 \\
NELL v2      & 88  & 2564  & 8219   & 2564  & 8219   & 922   & 2086  & 4586   & 935 \\
NELL v3      & 142 & 4647  & 16393  & 4647  & 16393  & 1851  & 3566  & 8048   & 1620 \\
NELL v4      & 76  & 2092  & 7546   & 2092  & 7546   & 876   & 2795  & 7073   & 1447 \\
ILPC Small  & 48  & 10230 & 78616  & 6653  & 20960  & 2908  & 6653  & 20960  & 2902 \\
ILPC Large  & 65  & 46626 & 202446 & 29246 & 77044  & 10179 & 29246 & 77044  & 10184 \\
HM 1k        & 11  & 36237 & 93364  & 36311 & 93364  & 1771  & 9899  & 18638  & 476 \\
HM 3k       & 11  & 32118 & 71097  & 32250 & 71097  & 1201  & 19218 & 38285  & 1349 \\
HM 5k       & 11  & 28601 & 57601  & 28744 & 57601  & 900   & 23792 & 48425  & 2124 \\
IndigoBM    & 229 & 12721 & 121601 & 12797 & 121601 & 14121 & 14775 & 250195 & 14904 \\
\bottomrule
\end{tabular}
\end{table}

\begin{table}[H]
\centering
\caption{23 Inductive entity and relation ($e, r$) datasets used in the experiments.}
\label{tab:inductive_entity_relation_datasets}
\begin{tabular}{lrrrrrrrrrrrr}
\toprule
\multirow{2}{*}{Dataset}
& \multicolumn{3}{c}{Training Graph}
& \multicolumn{4}{c}{Validation Graph}
& \multicolumn{4}{c}{Test Graph} \\
\cmidrule(lr){2-4}
\cmidrule(lr){5-8}
\cmidrule(lr){9-12}
& Entities & Rels & Triples
& Entities & Rels & Triples & Valid
& Entities & Rels & Triples & Test \\
\midrule
FB-25        & 5190  & 163 & 91571 & 4097  & 216 & 17147 & 5716 & 4097  & 216 & 17147 & 5716 \\
FB-50        & 5190  & 153 & 85375 & 4445  & 205 & 11636 & 3879 & 4445  & 205 & 11636 & 3879 \\
FB-75        & 4659  & 134 & 62809 & 2792  & 186 & 9316  & 3106 & 2792  & 186 & 9316  & 3106 \\
FB-100       & 4659  & 134 & 62809 & 2624  & 77  & 6987  & 2329 & 2624  & 77  & 6987  & 2329 \\
WK-25        & 12659 & 47  & 41873 & 3228  & 74  & 3391  & 1130 & 3228  & 74  & 3391  & 1131 \\
WK-50        & 12022 & 72  & 82481 & 9328  & 93  & 9672  & 3224 & 9328  & 93  & 9672  & 3225 \\
WK-75        & 6853  & 52  & 28741 & 2722  & 65  & 3430  & 1143 & 2722  & 65  & 3430  & 1144 \\
WK-100       & 9784  & 67  & 49875 & 12136 & 37  & 13487 & 4496 & 12136 & 37  & 13487 & 4496 \\
NL-0         & 1814  & 134 & 7796  & 2026  & 112 & 2287  & 763  & 2026  & 112 & 2287  & 763 \\
NL-25        & 4396  & 106 & 17578 & 2146  & 120 & 2230  & 743  & 2146  & 120 & 2230  & 744 \\
NL-50        & 4396  & 106 & 17578 & 2335  & 119 & 2576  & 859  & 2335  & 119 & 2576  & 859 \\
NL-75        & 2607  & 96  & 11058 & 1578  & 116 & 1818  & 606  & 1578  & 116 & 1818  & 607 \\
NL-100       & 1258  & 55  & 7832  & 1709  & 53  & 2378  & 793  & 1709  & 53  & 2378  & 793 \\
\midrule
Metafam      & 1316  & 28  & 13821 & 1316  & 28  & 13821 & 590  & 656   & 28  & 7257  & 184 \\
FBNELL       & 4636  & 100 & 10275 & 4636  & 100 & 10275 & 1055 & 4752  & 183 & 10685 & 597 \\
Wiki MT1 tax & 10000 & 10  & 17178 & 10000 & 10  & 17178 & 1908 & 10000 & 9   & 16526 & 1834 \\
Wiki MT1 health & 10000 & 7 & 14371 & 10000 & 7 & 14371 & 1596 & 10000 & 7 & 14110 & 1566 \\
Wiki MT2 org & 10000 & 10 & 23233 & 10000 & 10 & 23233 & 2581 & 10000 & 11 & 21976 & 2441 \\
Wiki MT2 sci & 10000 & 16 & 16471 & 10000 & 16 & 16471 & 1830 & 10000 & 16 & 14852 & 1650 \\
Wiki MT3 art & 10000 & 45 & 27262 & 10000 & 45 & 27262 & 3026 & 10000 & 45 & 28023 & 3113 \\
Wiki MT3 infra & 10000 & 24 & 21990 & 10000 & 24 & 21990 & 2443 & 10000 & 27 & 21646 & 2405 \\
Wiki MT4 sci & 10000 & 42 & 12576 & 10000 & 42 & 12576 & 1397 & 10000 & 42 & 12516 & 1388 \\
Wiki MT4 health & 10000 & 21 & 15539 & 10000 & 21 & 15539 & 1725 & 10000 & 20 & 15337 & 1703 \\
\bottomrule
\end{tabular}
\end{table}

\subsection{More results}
\label{app:moreresults}
In this section, we show more detailed results of oure methods. We compared the mrr performance of KGPFN with that of other advanced KG foundation models on 57 datasets. As shown in Fig.~\ref{fig:detailed_transductive}, Fig.~\ref{fig:detailed_inductive} and Fig.~\ref{fig:detailed_full_inductive}, KGPFN demonstrated competitive performance on most of the datasets. This demonstrates the advantage of in-context learning, where the model's performance can be improved through the context without the need for additional training.

\begin{figure}[H] 
  \centering
  \includegraphics[width=0.95\linewidth]{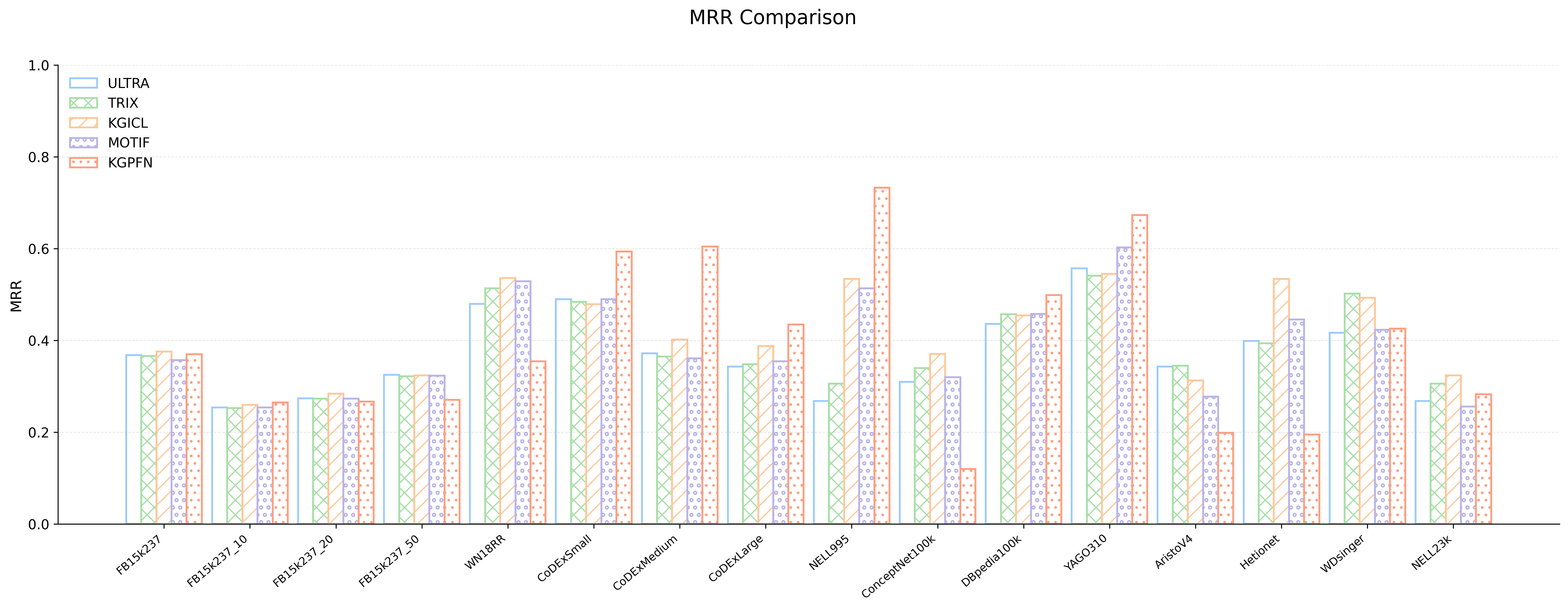} 
  \caption{MRR on 16 transductive KGs.}
  \label{fig:detailed_transductive}
\end{figure}

\begin{figure}[H] 
  \centering
  \includegraphics[width=0.95\linewidth]{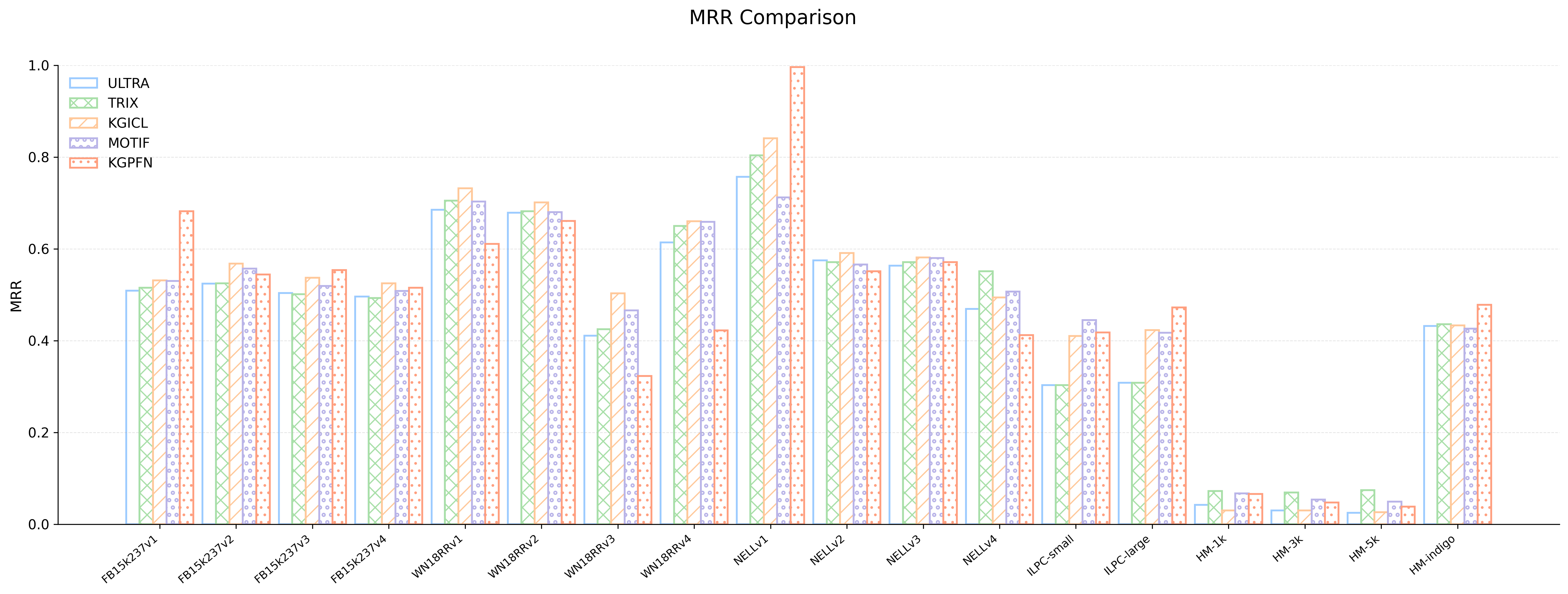} 
  \caption{MRR on 18 inductive KGs.}
  \label{fig:detailed_inductive}
\end{figure}

\begin{figure}[H] 
  \centering
  \includegraphics[width=0.95\linewidth]{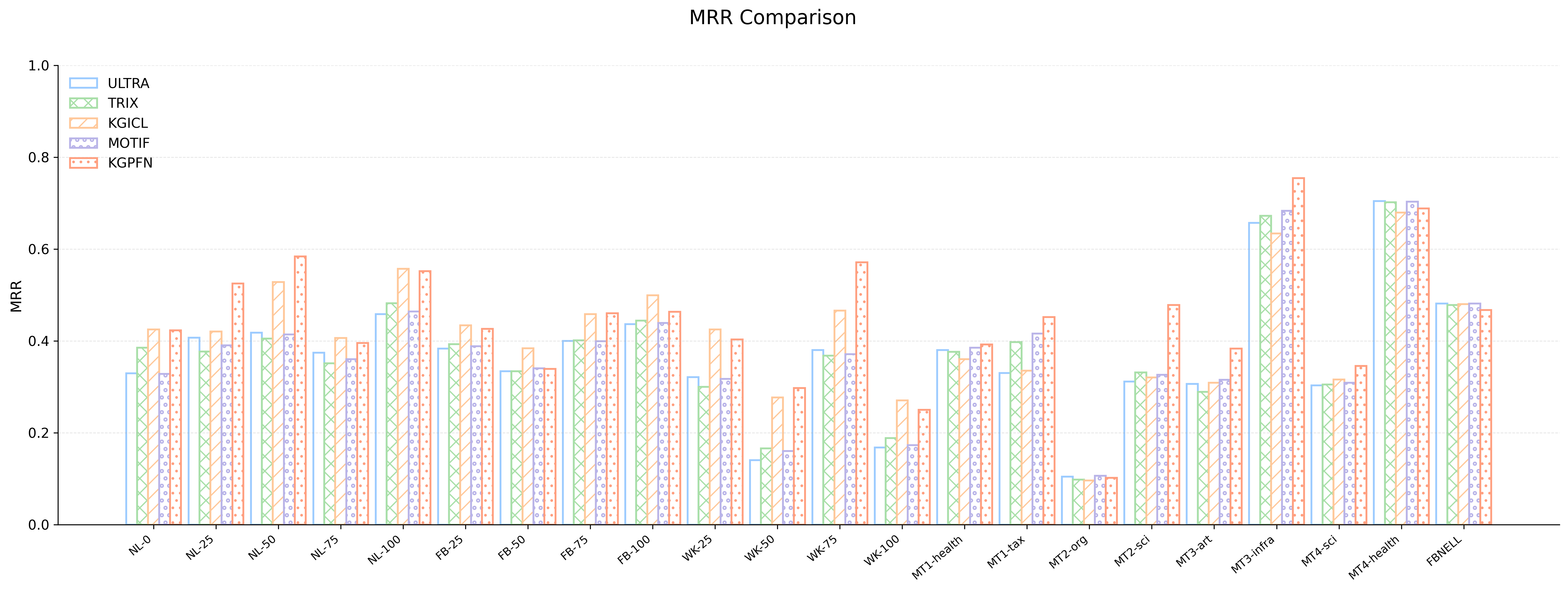} 
  \caption{MRR on 23 full inductive KGs.}
  \label{fig:detailed_full_inductive}
\end{figure}

Here, we present the detailed performance of our model on each dataset and compare two PFN architectures, TabICL~\cite{tabiclv2} and LIMIX~\cite{limix}, within our framework. For both architectures, we remove their original feature processing and encoding components, retaining only the feature-attention and sample-attention modules for context utilization. The results are reported in Tables~\ref{tab:transductive}, \ref{tab:inductive}, and \ref{tab:full_inductive}. 
As shown, the two architectures achieve comparable performance across different evaluation settings. This suggests that the gains of our method mainly come from the proposed local/global context construction and the use of attention-based context aggregation, rather than from architecture-specific preprocessing or encoding designs. The consistent results also indicate that our framework is flexible and can accommodate different PFN variants for exploiting in-context support examples.

\begin{table}[H]
\centering
\caption{Results on transductive datasets (MRR / Hits@10).}
\label{tab:transductive}
\small
\begin{tabular}{lcc|cc}
\toprule
 & \multicolumn{2}{c|}{\textbf{KGPFN(TabICL)}} & \multicolumn{2}{c}{\textbf{KGPFN(LimiX)}} \\
\textbf{Dataset} & \textbf{MRR} & \textbf{Hits@10} & \textbf{MRR} & \textbf{Hits@10} \\
\midrule
FB15k237        & 0.370 & 0.586 & 0.389 & 0.586 \\
FB15k237\_10    & 0.265 & 0.453 & 0.275 & 0.453 \\
FB15k237\_20    & 0.267 & 0.421 & 0.281 & 0.445 \\
FB15k237\_50    & 0.271 & 0.390 & 0.297 & 0.438 \\
WN18RR          & 0.355 & 0.590 & 0.399 & 0.547 \\
CoDExSmall      & 0.594 & 0.859 & 0.592 & 0.828 \\
CoDExMedium     & 0.605 & 0.828 & 0.641 & 0.820 \\
CoDExLarge      & 0.435 & 0.781 & 0.494 & 0.750 \\
NELL995         & 0.733 & 0.882 & 0.615 & 0.891 \\
ConceptNet100k  & 0.120 & 0.312 & 0.092 & 0.148 \\
DBpedia100k     & 0.499 & 0.805 & 0.526 & 0.813 \\
YAGO310         & 0.673 & 0.836 & 0.716 & 0.836 \\
AristoV4        & 0.199 & 0.422 & 0.272 & 0.523 \\
Hetionet        & 0.195 & 0.360 & 0.228 & 0.375 \\
WDsinger        & 0.426 & 0.562 & 0.412 & 0.539 \\
NELL23k         & 0.283 & 0.516 & 0.301 & 0.555 \\
\bottomrule
\end{tabular}
\end{table}

\begin{table}[H]
\centering
\caption{Results on inductive datasets (MRR / Hits@10).}
\label{tab:inductive}
\small
\begin{tabular}{lcc|cc}
\toprule
 & \multicolumn{2}{c|}{\textbf{KGPFN(TabICL)}} & \multicolumn{2}{c}{\textbf{KGPFN(LimiX)}} \\
\textbf{Dataset} & \textbf{MRR} & \textbf{Hits@10} & \textbf{MRR} & \textbf{Hits@10} \\
\midrule
FB15k237v1   & 0.682 & 0.852 & 0.672 & 0.852 \\
FB15k237v2   & 0.544 & 0.757 & 0.536 & 0.781 \\
FB15k237v3   & 0.554 & 0.695 & 0.552 & 0.695 \\
FB15k237v4   & 0.515 & 0.703 & 0.507 & 0.695 \\
WN18RRv1     & 0.611 & 0.828 & 0.609 & 0.820 \\
WN18RRv2     & 0.661 & 0.835 & 0.650 & 0.828 \\
WN18RRv3     & 0.323 & 0.430 & 0.325 & 0.438 \\
WN18RRv4     & 0.422 & 0.703 & 0.417 & 0.688 \\
NELLv1       & 0.996 & 1.000 & 0.996 & 1.000 \\
NELLv2       & 0.551 & 0.766 & 0.519 & 0.766 \\
NELLv3       & 0.571 & 0.798 & 0.583 & 0.781 \\
NELLv4       & 0.412 & 0.688 & 0.407 & 0.703 \\
ILPC-small   & 0.472 & 0.710 & 0.410 & 0.695 \\
ILPC-large   & 0.472 & 0.710 & 0.474 & 0.703 \\
HM-1k        & 0.066 & 0.133 & 0.067 & 0.141 \\
HM-3k        & 0.047 & 0.102 & 0.048 & 0.109 \\
HM-5k        & 0.038 & 0.109 & 0.037 & 0.102 \\
HM-indigo    & 0.478 & 0.703 & 0.476 & 0.695 \\
\bottomrule
\end{tabular}
\end{table}

\begin{table}[H]
\centering
\caption{Results on full-inductive datasets (MRR / Hits@10).}
\label{tab:full_inductive}
\small
\begin{tabular}{lcc|cc}
\toprule
 & \multicolumn{2}{c|}{\textbf{KGPFN(TabICL)}} & \multicolumn{2}{c}{\textbf{KGPFN(LimiX)}} \\
\textbf{Dataset} & \textbf{MRR} & \textbf{Hits@10} & \textbf{MRR} & \textbf{Hits@10} \\
\midrule
NL-0         & 0.423 & 0.625 & 0.385 & 0.586 \\
NL-25        & 0.525 & 0.672 & 0.505 & 0.641 \\
NL-50        & 0.584 & 0.727 & 0.553 & 0.719 \\
NL-75        & 0.395 & 0.578 & 0.349 & 0.539 \\
NL-100       & 0.552 & 0.704 & 0.535 & 0.695 \\
FB-25        & 0.426 & 0.609 & 0.446 & 0.641 \\
FB-50        & 0.339 & 0.594 & 0.334 & 0.602 \\
FB-75        & 0.460 & 0.656 & 0.469 & 0.641 \\
FB-100       & 0.463 & 0.688 & 0.475 & 0.711 \\
WK-25        & 0.403 & 0.773 & 0.352 & 0.656 \\
WK-50        & 0.297 & 0.539 & 0.307 & 0.508 \\
WK-75        & 0.571 & 0.836 & 0.575 & 0.766 \\
WK-100       & 0.250 & 0.516 & 0.293 & 0.492 \\
MT1-health   & 0.392 & 0.609 & 0.464 & 0.617 \\
MT1-tax      & 0.452 & 0.555 & 0.435 & 0.555 \\
MT2-org      & 0.102 & 0.188 & 0.120 & 0.227 \\
MT2-sci      & 0.478 & 0.648 & 0.483 & 0.633 \\
MT3-art      & 0.383 & 0.625 & 0.439 & 0.648 \\
MT3-infra    & 0.754 & 0.882 & 0.780 & 0.883 \\
MT4-sci      & 0.345 & 0.593 & 0.351 & 0.594 \\
MT4-health   & 0.688 & 0.789 & 0.738 & 0.813 \\
Metafam      & 0.211 & 0.625 & 0.189 & 0.219 \\
FBNELL       & 0.467 & 0.664 & 0.449 & 0.672 \\
\bottomrule
\end{tabular}
\end{table}

\section{Limitation}
\label{app:limitation}
Due to the limitation of computing resources, our model lacks the verification of the scaling law  on knowledge graph tasks.

\section{Broader Impact}
\label{app:broaderimpact}
This paper aims to advance the field of knowledge graph foundation models. Our work contributes to the field of Machine Learning and has many potential societal consequences. It may play an important role in understanding fields such as neural graph database and recommendation systems based on knowledge graphs. However, we believe that there are no negative impacts that need clarification.



\end{document}